\def\eqref#1{equation~\ref{#1}}
\def\1{\bm{1}}
\DeclareMathAlphabet{\mathsfit}{\encodingdefault}{\sfdefault}{m}{sl}
\SetMathAlphabet{\mathsfit}{bold}{\encodingdefault}{\sfdefault}{bx}{n}
\DeclareMathOperator*{\argmax}{arg\,max}
\newtheorem{theorem}{Theorem}
\newtheorem{conjecture}[theorem]{Conjecture}
\newtheorem*{remark}{Remark}
\newtheorem*{lemma*}{Lemma}
\newtheorem{Proposition}{Proposition}
\newtheorem*{Proposition*}{Proposition}
\newtheorem{assumption}{Assumption}
\newcommand\redout{\bgroup\markoverwith
{\textcolor{red}{\rule[0.5ex]{2pt}{0.8pt}}}\ULon}
\renewcommand{\Indentp}[1]{%
  \advance\leftskip by #1
  \advance\skiptext by -#1
  \advance\skiprule by #1}%
\renewcommand{\Indp}{\algocf@adjustskipindent\Indentp{\algoskipindent}}
\renewcommand{\Indm}{\algocf@adjustskipindent\Indentp{-\algoskipindent}}
\title{Decentralized Blockchain-based Robust Multi-agent Multi-armed Bandit}
\author{%
  Mengfan Xu$^{1}$ \quad Diego Klabjan$^{1}$ \\
  $^1$Department of Industrial Engineering and Management Sciences, Northwestern University\\
\texttt{MengfanXu2023@u.northwestern.edu, d-klabjan@northwestern.edu}}
\begin{document}

\maketitle

\thispagestyle{empty}

\begin{abstract}
We study a robust, i.e. in presence of malicious participants, multi-agent multi-armed bandit problem where multiple participants are distributed on a fully decentralized blockchain, with the possibility of some being malicious. The rewards of arms are homogeneous among the honest participants, following time-invariant stochastic distributions, which are revealed to the participants only when certain conditions are met to ensure that the coordination mechanism is secure enough. The coordination mechanism's objective is to efficiently ensure the cumulative rewards gained by the honest participants are maximized. To this end and to the best of our knowledge, we are the first to incorporate advanced techniques from blockchains, as well as novel mechanisms, into such a cooperative decision making framework to design optimal strategies for honest participants. This framework allows various malicious behaviors and the maintenance of security and participant privacy. More specifically, we select a pool of validators who communicate to all participants, design a new consensus mechanism based on digital signatures for these validators, invent a UCB-based strategy that requires less information from participants through secure multi-party computation, and design the chain-participant interaction and an incentive mechanism to encourage participants' participation. Notably, we are the first to prove the theoretical regret of the proposed algorithm and claim its optimality. Unlike existing work that integrates blockchains with learning problems such as federated learning which mainly focuses on  optimality via computational experiments, we demonstrate that the regret of honest participants is upper bounded by $\log{T}$ under certain assumptions. The regret bound is consistent with the multi-agent multi-armed bandit problem without malicious participants and the robust multi-agent multi-armed bandit problem with purely Byzantine attacks which do not affect the entire system.
\end{abstract}

\section{Introduction}

%\paragraph{Multi-armed Bandit and multi-agent MAB} 
Multi-armed Bandit (MAB) \citep{auer2002finite, auer2002nonstochastic} models the classical sequential decision making process that dynamically balances between exploration and exploitation in an online context. Specifically, in this paradigm, a player engages in a game, from which the player selects precisely one arm and observes the corresponding reward at each time step. The player aims to maximize the cumulative reward throughout the game. This is also equivalent to the so-called regret minimization problem navigating the trade-off between exploration (e.g., exploring unknown arms) and exploitation (e.g., favoring the currently known optimal arm). The recent emerging advancement of federated learning, wherein multiple participants jointly train a shared model, has spurred a surge of interest in the domain of  multi-agent multi-armed bandit (multi-agent MAB). In this context, multiple participants concurrently interact with multiple MABs, with the objective being the optimization of the cumulative averaged reward across all the participants through communications. Significantly, in addition to the exploration-exploitation trade-off, these participants engage in communication constrained by the underlying graph structure, which necessitates the exploration of the information of other participants and to develop strategies accordingly.

%\paragraph{Robust multi-agent MAB}
Numerous research has been focused on the multi-agent MAB problem, including both centralized settings as in \citep{bistritz2018distributed,zhu2021federated,huang2021federated,mitra2021exploiting,reda2022near,yan2022federated}, and decentralized settings as in \citep{landgren2016distributed,landgren2016distributed_2,landgren2021distributed,zhu2020distributed,martinez2019decentralized,agarwal2022multi}, where it is assumed that reward distributions are uniform among participants, namely homogeneous. Recent attention has shifted towards addressing decentralized, heterogeneous variants, including \citep{tao2022optimal, wang2021multitask, jiang2023multi, zhu2020distributed,zhu2021decentralized,zhu2021federated, zhu2023distributed, xu2023decentralized}, which are more general and bring additional complexities. In these scenarios, the shared assumption is that all participants exhibit honesty, refraining from any malicious behaviors, and adhere to both the shared objective and the designed strategies. However, real-world scenarios often deviate from this assumption, are composed of  malicious participants that perform disruptively. Examples include failed machines in parallel computing, the existence of hackers in an email system, and selfish retailers in a supply chain network. Consequently, recent research, such as \citep{vial2021robust}, has focused on the multi-agent MAB setting with malicious participants, which is formulated as a robust multi-agent MAB problem. This line of work yields algorithms that perform optimally, provided that the number of malicious participants remains reasonably limited. %, effectively capturing more general and practical settings. 
More recently, the work of \citep{zhu2023byzantine} proposes a byzantine-resilient framework and shows that collaboration in a setting with malicious participants upgrades the performance if at every time step, the neighbor set of each participant contains at least $\frac{2}{3}$ ratio of honest participants and downgrades the performance otherwise.

%\paragraph{Usecases: Smart home and IoT}

%\paragraph{concerns of the existing work}
However, there are three major concerns related to the existing robust multi-agent MAB framework, namely optimality, security, and privacy, respectively. Firstly, in \citep{vial2021robust}, the truthfulness of the integrated reward estimators by participants is not taken into account. Every participant maintains reward estimations and thus we also call them estimators. In essence, it means it might not be possible to assert the correctness of these estimators, even though the relative differences between the arms are bounded. In certain scenarios, estimators play a crucial role in guiding decision making. For instance, in the context of smart home \citep{zhao2020privacy}, driven by the rapid growth of the Internet of Things (IoT), in a smart home device setting the suppliers of the devices are the participants monitored by the manufacturer, the devices are the arms, and the consumers are the environment, the manufacturer seeks to understand consumer behavior. The reward corresponds to any metric measuring consumer engagement. Each supplier develops its own engagement (reward) estimatation by arm pulls where it is important for the estimators to be accurate. The knowledge about the ground truth, i.e. consumer behavior in expectation across time, is essential, making the correctness of estimators a critical concern. %In other words, the robust multi-agent MAB framework may not be directly applicable here. 
Secondly, there is the possibility of malicious participants (suppliers) exhibiting various disruptive behavior beyond broadcasting inaccurate estimators. The attack behaviors may have an impact on the coordination mechanism where honest participants operate in, with the conflict between these two groups of participants (malicious and honest) which may ultimately disrupt the entire system. In this case, the honest participants cannot obtain any rewards and significantly downgrades the performance, which is a facet not covered in 
existing work~\citep{vial2021robust, zhu2023byzantine}. For example, in a network routing problem, where devices (i.e., participants) send information through communication channels that represent the arms to maximize information throughput (i.e., the reward), malicious participants could intentionally cause channel congestion and disrupt the traffic that honest participants rely on. This has the potential to systematically affect the performance of honest participants, which has not yet been studied,  serving as a significant motivation for this paper. Thirdly, existing literature assumes that participants are willing to share all the information of their interactions with other participants, including the number of pulls of arms and the corresponding reward estimators based on these number of pulls. This, however, exposes the participants to the risk of being less private, as it might be easy to retrieve the cumulative reward and action sequence, based on the shared information. These risks can lead to severe privacy leakage and thus need to be addressed, which has not yet been explored by the existing work and thereby motivating our work herein.

%\paragraph{Blockchain-based technologies}

%\paragraph{drawbacks of the existing literature and little theoretical results}
Notably, blockchains have a great potential to address these challenges, which are fully decentralized structures %where multiple participants can communicate with one another without the involvement of an authorized party. They 
and have demonstrated exceptional performance in enhancing system security and accuracy across a wide range of domains \citep{feng2023cobc}. This trending concept, widely applied in finance, healthcare and edge computing, was initially introduced to facilitate peer-to-peer (P2P) networking and cryptography, as outlined in the seminal work by \citep{nakamoto2008bitcoin}. A blockchain (permissioned where the set of participants is fixed versus permisionless where the set of participants is dynamic and anyone can join) comprises of a storage system for recording transactions and data, a consensus mechanism for participants to ensure secure decentralized communication, updates, and agreement, and a verification stage to assess the effectiveness of updates, often referred to as block operations \citep{niranjanamurthy2019analysis}, which thus provides possibilities for addressing the aforementioned concerns. First, the existence of verification guarantees the correctness of the information before adding the block to the maintained chain, and the storage system ensures the history is immutable. %Specifically, it involves scrutinizing the updated blocks and deciding whether to add the block or not based on the validity of the information. 
Secondly, the consensus mechanism ensures that honest validators, which are representatives of participants, need to reach a consensus even before they are aware of each other's identities, leading to a higher level of security and mitigating systematic attacks. Lastly, enabling cryptography and full decentralization without a central authority has the potential to improve the privacy level. However, no attention has been given to understanding how blockchains can be incorporated into an online sequential decision making regime, creating a gap between multi-agent MAB and blockchains that we take a step to close it. 
%To the best knowledge, the incorporation of blockchains into multi-agent MAB settings has not yet been explored in the existing literature, which is a significant departure of this work. 

%\paragraph{federated learning and no theoretical guarantee} 
There has been a line of work adapting blockchains into learning paradigms, and blockchain-based federated learning has been particularly successful as in \citep{li2022blockchain, zhao2020privacy, lu2019blockchain, wang2022blockchain}. In this context, multiple participants are distributed on a blockchain, and honest participants aim to optimize the model weights of a target model despite the presence of malicious participants. Notably, the scale of the model has led to the introduction of a new storage system on the blockchain, the Interplanetary File System (IPFS), which operates off-chain, ensuring the stability and efficiency of block operations on the chain. However, due to the fundamentally different natures of decision making in MAB and federated learning, the existing literature does not apply to the multi-agent MAB setting, which serves as the motivation for developing a novel formulation and framework for blockchain-based multi-agent MAB. Moreover, there is limited study on the theoretical effectiveness of blockchain-based federated learning, as most studies focus on their deployment performances. Theoretical validity is crucial to ensure cybersecurity because deploying blockchains, even in an experimental setting, is risky and has been extremely challenging.  Henceforth, it remains unexplored how to effectively incorporate blockchains into the robust multi-agent MAB framework, and how to analyze the theoretical properties of the new algorithms, which we address herein. %This highlights the potential to introduce new frameworks and methods with provable optimality guarantees in multi-agent MAB in cybersecurity, which is one of the main focuses throughout the paper.  

Moreover, existing blockchain frameworks have limitations which make them not completely suitable for multi-agent MAB problems due to their online sequential decision-making nature. For instance, some consensus protocols assume the existence of a leader \citep{chen2017security}, introducing authority risks. Additionally, the general rule of practical Byzantine Fault Tolerant (PBFT) \citep{lei2018reputation} is to secure more than $\frac{2}{3}$ of the votes \citep{chen2019algorand}, which can be impractical in real-world scenarios. Meanwhile, it remains largely unclear about a theoretically efficient and effective validator selection protocol \citep{lei2018reputation} or a protocol where participants would be willing to participate \citep{nojoumian2019incentivizing, zhao2020privacy}. Most importantly, the online decision making problem necessitates the deployment of strategies offering real-time interactions with an exogenous environment, a feature not present in traditional blockchain frameworks. Consequently, the incorporation requires careful modifications to blockchains and the introduction of new mechanisms.  We address these challenges herein. %Moreover, little attention has been given to understanding the theoretical properties of blockchains, creating a gap between existing learning theory and blockchains. This holds as the current literature on blockchains and related topics has not yet explored or addressed the theoretical guarantees, even though empirical examination and validation have been conducted across a wide range of domains. 

To this end, we herein propose a novel formulation of robust multi-agent MAB incorporating blockchains. Specifically, we are the first to study the robust multi-agent MAB problem where participants are fully distributed, can be malicious, and operate on permissioned blockchains. In this context, a fixed set of participants pull arms and communicate to validators, and validators communicate with one another and decide on a block to be sent to the chain. participants can only receive rewards when the block containing the consensus information is approved by the chain, in order to ensure security at each time step, which differs largely from the existing MAB frameworks. In other words, the rewards are conditionally observable, even for the pulled arm, which complicates the traditional bandit feedback, as disapproved blocks introduce new challenges in this online and partial information setting. participants are allowed to be malicious in various disruptive aspects during the game. The objective of the honest participants is to maximize their averaged cumulative received reward. The blockchain keeps track of everything (the history is immutable), guarantees the functionality of the coordination mechanism through chain operations, and communicates with the environment. The proposed formulation introduces additional complexities, as participants not only design strategies for selecting arms but also strategically interact with both other participants and the blockchain. %Moreover, the presence of blockchain also complicates the traditional bandit feedback, as disapproved blocks introduce new challenges in this online and partial information setting. 

Additionally, we develop an algorithmic framework for the new formulation, motivated by existing work while introducing novel techniques. The framework uses a burn-in and learning period. We incorporate a UCB-like strategy into the learning phase to perform arm selection, while using random arm selection during the burn-in period. We also design a theoretically effective validator selection mechanism that eliminates the need for an authorized leader, including both full decentralization and efficient reputation-based selection. We propose the update rules for both participants and validators to leverage the feedback from both the environment and the participant set. Furthermore, we modify the consensus protocol without relying on $\frac{2}{3}$ voting; instead, we use a digital signature scheme \citep{goldwasser1988digital} coupled with the consensus protocol in \citep{lamport2019byzantine}. Moreover, we introduce the role of a smart contract \citep{hu2020comprehensive} that enables interaction with both the blockchain and the environment, which validates the consensus information and collects the feedback from the environment. To incentivize the participation of malicious participants in the game (we want the malicious participants to actively participate via information sharing in order to be identified soon) we invent a novel cost mechanism inspired by the recent use of mechanism design in federated learning \citep{murhekar2023incentives}. It is worth noting that the existence of this smart contract and cost mechanism also guarantees the correctness of the information transmitted on the chain.

Subsequently, we perform theoretical analyses of the proposed algorithm. More specifically, we formally analyze the regret that reflects optimality and fundamental impact of malicious behavior on blockchains. Precisely, we show that under different assumptions in different settings, the regret of honest participants is always upper bounded by $O(\log T)$, which is consistent with the existing algorithms for robust multi-agent MAB problems \citep{zhu2023byzantine, vial2021robust}. This is the very first theoretical result on leveraging blockchains for online sequential decision making problems, to the best of our knowledge.  Furthermore, this regret bound coincides with the existing regret lower bounds in multi-agent MAB when assuming no participants are malicious \citep{xu2023}, implying its optimality. Moreover, we find that,  surprisingly, various aspects about security are by-products of optimality, since the blockchain framework needs to be secure enough in order to maximize the received reward. %, and we also include a discussion on them. 

Our main contributions are as follows. First, we propose a novel formulation of multi-agent MAB with malicious participants, where rewards are obtainable only when the coordination mechanism's security is guaranteed. Additionally, the actual received rewards account for the accuracy of the shared information through our proposed cost mechanism. To maximize the cumulative rewards of honest participants, we develop a new algorithmic framework that introduces blockchain techniques. Along the way we design new mechanisms and protocols. We also prove the theoretical effectiveness of the algorithm through an extensive analysis of regret under mild assumptions on the problem setting, such as the ratio of honest participants, the cost definition, and the validator selection protocol. Ultimately, this work comprehensively bridges the gap between cybersecurity and online sequential decision making.
%\paragraph{our contributions}

The structure of the paper is as follows. In Section 2, we introduce the problem formulation, along with the notations that are used throughout. Following that, in Section 3, we propose the algorithmic framework. Subsequently, in Section 4, we provide detailed analyses of the theoretical regret guarantee for the proposed algorithms. Meanwhile, in Section 5, we provide a discussion on other performance measure, such as security and equilibrium. Lastly, we conclude the paper and point out possible future directions in Section 6. 

%Additionally, we compare the performance of the newly proposed algorithms with those of existing approaches for robust multi-agent MAB problems. This comparative analysis aims to highlight the advantages of incorporating blockchain technology within the traditional multi-agent MAB framework.

%Blockchain-based federated learning is such a success ~\citep{li2022blockchain, zhao2020privacy, lu2019blockchain, wang2022blockchain}. More specifically, multiple participants are distributed on a blockchain and honest participants aim to optimize the model weights of a target model, given the existence of malicious participants. Notably, the scale of the model motivates the proposal of a new storage system on a blockchain, namely Interplanetary File System (IPFS), which is off-chain and thus ensures the stability and efficiency of block operation on the chain. 

\section{Problem Formulation}
We start by introducing the notations used throughout the paper. Consistent with the traditional MAB setting, we consider $K$ arms, labeled as $1,2, \ldots, K$. The time horizon of the game is denoted as $T$, and let us denote each time step as $1 \leq t \leq  T$. Additionally as in Multi-agent MAB, let us denote the number of participants as $M$ labeled from $1$ to $M$. We denote the public and secret keys of participant $m$ as $(PK_m, SK_m)$ for any $1 \leq m \leq M$. The list of public keys ${PK_1, PK_2, \ldots, PK_M}$ is public to anyone, in the order indicated by the participant set. %It is worth noting that we use the terminologies "participant" and "participant" interchangeably for the rest of the paper. 
Meanwhile, in our newly proposed blockchain framework, we denote the total number of blocks as $B = T$ and whether each block at time step $t$ is approved or not is represented by a binary variable $b_t \in \{0,1\}$. Let us denote the reward of arm $i$ at participant $m$ at time step $t$ as $\{r_i^m(t)\}_{i,m,t}$, which follows a stochastic distribution with a time-invariant mean value $\{\mu_i\}_{i}$. Let $a_m^t$ be the arm selected at time $t$ by participant $m$ and let $n_{m,i}(t)$ be the number of arm pulls for arm $i$ at participant $m$ at time $t$. We denote the set of honest participants and malicious participants as $M_H$ and $M_A$, respectively, which are not known apriori. Note that they are time-invariant. Similarly, let $S_V(t)$ denote the set of validators at time $t$ which is algorithmically determined. We denote the estimators maintained at participant $m$ as $\bar{\mu}_{i}^{m}(t), \Tilde{\mu}_{i}^{m}(t)$ for local and  global reward estimators, respectively, and the validators estimators as $\Tilde{\mu}_{i}(t)$. We point out that $\Tilde{\mu}_{i}(t)$ is a function of $\bar{\mu}_{i}^{m}(t)$. 

Let $h_j^t(\mathcal{F}_t)$ be the estimators given by malicious participant $j \in M_A$ where $\mathcal{F}_t$ denotes the history up to time step $t$ (everything on the blockchain and additional information shared by other participants). The blocks on the blockchain record the execution information. Specifically, at each time step $t$ a block  records the global estimators $\{\Tilde{\mu}_i(t)\}_{i}$ and local estimators $\{\bar{\mu}_i^m(t)\}_{m,i}$, encrypted values of $\{n_{m,i}(t)\}_{m,i}$, $B_t$ specified in Aggregation, and arms $a_m^t$ pulled. Moreover, the block also records the reward  $r_i^m(t)$ of each honest participant $m \in M_H$. The information related to an individual participant, such as $\bar{\mu}_i^m(t)$ and $r_i^m(t)$, is signed by the participants using digital signatures that are the same across time. By using Global Update in Algorithm 4, the definition of  $\{\Tilde{\mu}_i(t)\}_{i}$  and (\ref{eq:xy}), all these quantities can be verified given $r_i^m$ and $a_m^t$. 

%\hl{The blocks on the blockchain record the execution information. Specifically, at each time step $t$, a block contains a block head and a block body. The block head includes the hash value of this block, }\textcolor{red}{$hs_{t}$} \hl{generated by a random seed $\bar{q}_t$ such as using MD5 Hash or SHA-256} \textcolor{red}{on $\{\Tilde{\mu}_i\}_{i}$}, \hl{time $t$, as well as the hash value of the most recently added block on the chain, i.e., $hs_{\tau}$ with $\tau = \max_{s<t}\{b_{s} = 1\}$. The block body records the global estimators $\{\Tilde{\mu}_i\}_{i}$ and local estimators $\{\bar{\mu}_i^m\}_{m,i}$, encrypted values of $\{n_{m,i}(t)\}_{m,i}$ using the secure multi-party computation protocol} \citep{asharov2012multiparty} \hl{and the formula for computing $\{\Tilde{\mu}_i\}_i$ based on $\{\bar{\mu}_i^m\}_{m,i}$ as specified in Aggregation and Global Update. The block head allows us to trace back along the chain}  \textcolor{red}{in order to validate the hash pair $(hs_t, hs_{\tau})$ in the current block given that $hs_{\tau}$ depends on the estimators in the block at time step $\tau$,} \hl{and the block body allows us to verify the validated estimators based on the local estimators using the given formula. }

The process during one iteration is as follows. At the beginning of each decision time, each participant selects an arm based on its own policy. Then, a set of validators is selected, and the participants broadcast their reward estimators to the validators. The validators perform aggregation of the collected information. Next, they run a consensus protocol to examine whether the majority agree on the aggregated information, a process called validation. They send the validated information to the smart contract, which verifies its correctness and sends feedback to the environment. If the smart contract is approved, the blockchain is updated.  Lastly, the environment distributes the reward information plus cost based on the feedback from the smart contract (only if the block has been approved). The participants then update their estimators accordingly. The corresponding flowchart is presented in Figure~\ref{fig:1}.

\begin{figure}[h]
    \centering
\scalebox{0.45}{\includegraphics{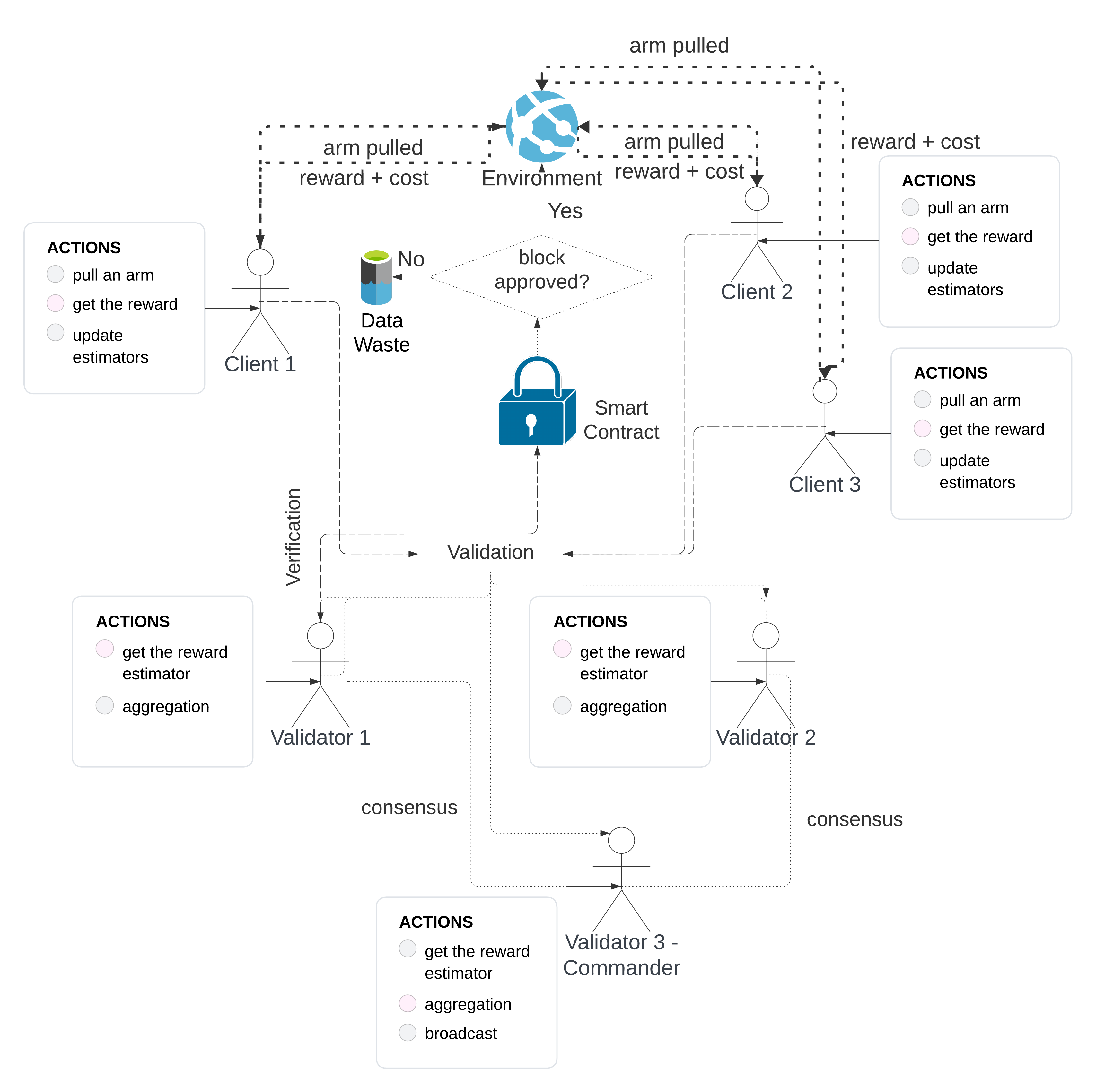}}
\caption{The flow of the algorithm}\label{fig:1}
\end{figure}

\paragraph{Cost Mechanism} We propose a cost mechanism where if the estimators from the malicious participants are used in the validated estimators, i.e.
\begin{align*}
   \frac{\partial \Tilde{\mu}_{i}(t)}{\partial \bar{\mu}_{i}^m(t)} \neq0,  
\end{align*} then the honest participants incur a cost of $c_t^m \geq 0$ and malicious participants receive $c_t^m < 0$, which they are not aware of until the end of the game. It incentives the participation of malicious participants, in particular, given that they may not be willing to share anything. In the meantime, as a by-product, it also penalizes the aggregated estimators by the honest participants, which ensures the correctness of the estimators. In addition $c_t^m = 0$ for every $m$ if $\frac{\partial \Tilde{\mu}_{i}(t)}{\partial \bar{\mu}_{i}^m(t)} = 0$.

We also present some terminology used in the digital signature scheme \citep{goldwasser1988digital} and secure multi-party computation \citep{asharov2012multiparty}. 
\paragraph{Existential Forgery} Following the definition in ~\citep{goldwasser1988digital}, malicious participants successfully perform an existential forgery if there exists a pair consisting of a message and a signature, such that the signature is produced by an honest participant.

\paragraph{Adaptive Chosen Message Attack} Consistent with ~\citep{goldwasser1988digital}, we consider the most general form of a message attack, namely the adaptive chosen message attack. In this context, a malicious participant not only has access to the signatures of honest participants but also can determine what message to send after seeing these signatures. This grants the malicious participant a high degree of freedom, thereby making the attack more severe.  

\paragraph{Universal Composability Framework} For homomorphic encryption, more specifically, secure multi-party computation, we follow the standard framework as in ~\citep{canetti2001universally}. Specifically, an exogenous environment, also known as an environment machine, interacts sequentially with a protocol. The process runs as follows. The environment sends some inputs to the protocol and receives outputs from the protocol that may contain malicious components. If there exists an ideal adversary such that the environment machine cannot distinguish the difference between interacting with this protocol or the ideal adversary, the protocol is deemed universally composable secure.

With the goal to maximize the total cumulative (expected) reward of honest participants, we define the regret as follows. We denote the cumulative reward of honest participants as 
\begin{align*}
r_T = \sum_{m \in M_H}\sum_{t=1}^Tr_{a_m^t}^{m}(t)1_{b_t = 1} - \sum_{t=1}^Tc_t%, \quad \sum_{m \in M_H}\sum_{t=1}^TE[r_{a_m^t}^{m}(t)1_{b_t = 1}] - \sum_{t=1}^TE[c_t]
\end{align*}
and the regret as
\begin{align*}
    R_T = \max_i\sum_{m \in M_H}\sum_{t=1}^Tr_{i}^{m}(t)1_{b_t = 1} - r_T
\end{align*}
and pseudo regret
\begin{align*}
    \bar{R}_T = \max_i\sum_{m \in M_H}\sum_{t=1}^T\mu_{i}^m - E[r_T].
\end{align*}

\iffalse
\paragraph{Option 1} The reward is defined as follows
\begin{itemize}
    \item $r_{a_m^t}^{m,b}(t) - c$ if only malicious estimators are used
    \item $r_{a_m^t}^{m,b}(t)$ otherwise
\end{itemize}

\paragraph{Option 2}
The utility of the honest participant is defined as 
\begin{align*}
    \sum_{m \in M_H}\sum_{t=1}^Tr_{a_m^t}^{b} - \sum_{m \in M_A}\sum_{t=1}^Tr_{a_m^t}^{b} 
\end{align*}

\fi

We argue the rationale of this regret definition as follows. It holds true that these two regret measures are well-defined, considering that $M_H$ is fixed and does not change with time. Furthermore, our definition aligns with those used in the context of blockchain-based federated learning \citep{zhao2020privacy}, as their objective is to optimize the model maintained by honest participants, though without involving online decision making. Additionally, this definition is consistent with the existing robust multi-agent MAB problem \citep{vial2021robust}, except that the cost mechanism is introduced which incentives participation and guarantees correctness. Compared to the multi-agent MAB, our regret is averaged over only honest participants due to the existence of malicious participants. Note that the two measures are the same if the number of malicious participants is zero since the cost $c_t$ is also zero in such a case, implying consistency. %which is equivalent to honest participants in both our context and blockchain-based federated learning.

\iffalse
\begin{itemize}
    \item participants (both players and miners) $1,2, \ldots, M$
    \item arms $1, \ldots, K$
    \item at time step $1 \leq t \leq T$
    \item in block $1 \leq b \leq B$
    \item rewards are $\{r_i^m(b,t)\}_{i,m,b,t}$ following a stochastic distribution with mean values $\{\mu_i(b)\}_{i,b}$ 
    \begin{itemize}
        \item homogeneous across the participants
        \item heterogeneous across the blocks
    \end{itemize}
    \end{itemize}
\newpage
\fi

\section{Methodologies}

In this section, we present our proposed methodologies within this new framework.  % as outlined in Algorithms 1, 2, and 3. 
Notably, we develop the first algorithmic framework at the interface of blockchains in cybersecurity and multi-agent MAB in online sequential decision making, addressing the joint challenges of optimality, security, and privacy. We leverage the blockchain structure while introducing new advancements to the existing ones, to theoretically and efficiently guarantee the functionality of the chain  with new consensus protocols and a cost mechanism. %To the best of our knowledge, this is the first work on Blockchain with such robust theoretical guarantees. 
Additionally, it is designed for online sequential decision making scenarios, distinguishing our work from existing literature on federated learning. Moreover, compared to existing work on Byzantine-resilient multi-agent MAB, our methodology operates on a blockchain with an added layer of security and privacy. % and incorporates secure multi-party computation into the communication process, which largely improves the security and privacy of honest participants with minimal risk of being compromised. W
%We also incorporate the cost mechanism to incentivize the participation of malicious participants, driven by real-world applications, consistent with blockchain-based federated learning, while being customized for this online decision-making regime.

More specifically, the algorithmic framework is composed of two phases: the burn-in period, which is a warm-up phase for $t \leq L$, where $L$ is the length of the burn-in period, and the learning period, where $t > L$. It consists of 5 functions, with the main algorithm presented in Algorithm \ref{alg:bc}, and the remaining functions detailed in Algorithms 2-5. Algorithm \ref{alg:bc} constitutes the core of the methodology, including the sequential strategies executed by the honest participants, black-box operations by the malicious participants, and the chain executions. %On the other hand, for illustration purpose, Algorithms \ref{vc}, \ref{ag}, \ref{co}, \ref{gu}, and \ref{or} are functions integrated into Algorithm \ref{alg:bc} for different functionalities.

The core algorithm includes several stages, as indicated in the following order.  
We present the pseudo code of the core algorithm in Algorithm \ref{alg:bc}, named BC-UCB. Here, the common random seed $\bar{q}_t$, $t = 1, \ldots, T$ and the random seed for each participant $q = (q_1, q_2, \ldots, q_M)$ are publicly known in advance. Function $VRF$ refers to verifiable random functions proposed in \citep{micali1999verifiable} composed of $G$ that represents the generating function for the public and secret key with seed $q^0$, i.e.  $G(q^0) = (pk, sk)$, and $VRF_F(\bar{q}_t,G(q^0)) = (hash, \pi)$ where $hash$ is a hash value and $\pi$ is a function (proof) that returns $True$ or $False$ given $hash$ and public key $pk$, i.e. $\pi(hash, pk)$ outputs $True$ or $False$.  %A $VRF$ returns a hash and a proof $\pi$ that enables the verification of $x$ corresponding to hash without knowing the secret key. %And $hashlen$ is the bit-length of hash returned by $VRF$. 
Let $hl$ be the size of $hash$ which is an input to $\pi$. Let $B(k, n, w)$ be the probability of selecting $k$ samples from $n$ samples with a success probability $w$ where $k \leq n$. This is also known as the probability density function of the binomial distribution with success probability $w$. For any multi-set $S_0$, $majority(S_0)$ refers to an element in $S_0$ with the highest count.

\paragraph{Arm selection} As in an MAB framework, the participants decide which arm to pull at each time step. The strategies depend on whether participants are honest or malicious. The honest participants follow a UCB-like approach. More specifically, each honest participant $m$ selects arm $a_m^t = t \mod K$ during the burn-in period. During the learning period, it assigns a score to each arm $i$ and selects the arm with the highest score, which can be formally written as
\begin{align*}
    a_m^t = \argmax_i\Tilde{\mu}_{i}^m(t) + F(m,i,t)
\end{align*}
where $\Tilde{\mu}_{i}^m(t) $ is the maintained estimator at participant $m$. Here $F(m,i,t) = (\frac{C_1\log t}{n_{m,i}(t)})^{\beta}$ with constant $C_1, \beta$ being specified in the theorems. A malicious participant $j$, however, selects arms based on arbitrary strategies, which is also known as Byzantine's attack and  written as 
\begin{align*}
    a_j^t = h^t_j(\mathcal{F}_t) \in [K]
\end{align*}
where $\mathcal{F}_t$ denotes the history up to time step $t$ (everything on the blockchain and additional information shared by other participants).

\paragraph{Validator or Commander Selection}
At each time step, a coordination mechanism or iterative protocol selects a pool of participants allowed to act on the chain, known as validators. In each iteration of the protocol, first the commander is selected which communicates with all validators and in the rest of the iteration the validators communicate among themselves. The commander can change from one iteration to the next one. Specifically, the coordination mechanism samples the set of validators and commanders according to Algorithm \ref{vc}, based on the trust coefficients of participants $p_m(t), w_m(t)$, which are set to 1 initially. %The purpose of this step is to guarantee security and efficiency, as 
The chain relays this set of validators to aggregate the reward estimators and to achieve consensus as detailed below. 

We use a smart contract that takes membership of a participant in $S_V(t)$ as input and produces the sorted set $S_V(t)$ based on the public keys $PK_m$ of participants. It is worth noting that the sorting function can be incorporated into the script, either by loading packages in the programming script using existing programming languages or by implementing the topological sorting procedure \citep{dickerson2017adding}. Then, the participants access this smart contract $sc_{sort}$ with input $S_V(t), PK$ to obtain the set of validators $S_V(t)$ from its output.

\paragraph{Broadcasting} During broadcasting, the participants sent information to validators which then perform the aggregation step. To expand, malicious participant $j$ broadcasts its estimators $\bar{\mu}_{i}^j(t)$ to the validators using a black-box attack, e.g. a Byzantine's attack or a backdoor attack. Honest participant $m$ broadcasts its true reward estimators $\bar{\mu}_{i}^m(t)$ to the validators.

\paragraph{Aggregation} Next, the validators integrate the received information. Specifically, for each honest validator $j$, an honest validator determines the set, $A_t^j, B_t^j$ as follows.  

For $t > L$, the set $A_t^j$ reads as 
\begin{align*}
    m \in A_t^j \Leftrightarrow n_{m,i}(t) > \frac{n_{j,i}(t) }{k_i(t)} \text{ for every $i$}%\Leftrightarrow m \in M_H
\end{align*} 
where $k_i(t) \geq \max_{k \in M}\frac{n_{k,i}(t)K}{L}$ is the threshold parameter  which can be constructed through the secure multi-party computation protocol as in \citep{asharov2012multiparty}, without knowing the value of $n_{m,i}(t)$ to ensure privacy. More specifically, each participant $m$ sends $n_{m,i}(t)$ and the value of $k_i(t)$ to the protocol. The protocol then outputs whether $m \in A_t^j$. 

The set $B_t^j$ is computed as follows, depending on the size of $A_t^j$. If $|A_t^j| > 2f$ where $f = |M_A|$ and the process is in the learning period $t > L$, then $B_t^j = \cup_{i}\{(m,\bar{\mu}_{i}^m(t)): \text{$\bar{\mu}_{i}^m(t)$ is smaller than the top $f$ values in $A_t^j$ and larger than the bottom $f$ values in $A_t^j$}\}$.

Otherwise in burn-in, $B_t^j = \{t \text{ mod } K\}$ and $A_t^j = \emptyset$. 

%Alternatively, we use the notations $top(C, -f)$ and $top(C, f)$ to denote values below $f$ and the top $f$ values in set $C$, respectively.

Once again, the malicious participants choose the sets $A_t$ and $B_t$ in a black-box manner.

\paragraph{Consensus} The consensus protocol is central to the execution of the blockchain and guarantees that the chain is secure. More specifically, we incorporate the digital signature scheme \citep{goldwasser1988digital} into the solution to the Byzantine General Problem \citep{lamport2019byzantine} under any number of malicious validators. The pseudo code is presented in Algorithm \ref{co}. First, a commander is selected from the validators that broadcasts $B_t$ to other validators with its signature generated by \citep{goldwasser1988digital}, which we call a message. This process is then repeated at least $M$ times, based on the algorithm in \citep{lamport2019byzantine}. The validators output the mode of the maintained messages. The consensus is successful if more than 50\% of the validators output the $B_t$ maintained by the honest validators. Otherwise, the consensus step fails, resulting in an empty set 
 $B_t$. 

\paragraph{Global Update}
The set $B_t$ is then sent to the validators, which computes the average of the estimators within $B_t$, known as the global update detailed in Algorithm \ref{gu}. More precisely, for each arm $i$ at time step $t$, the estimator is computed as 
\begin{align*}
& \Tilde{\mu}_i(t) = \frac{1}{2}(\hat{\mu}_{i}(t) +\Tilde{\mu}_i(\tau))  \\
& \hat{\mu}_{i}(t) = \frac{\sum_{m \in B_t}\bar{\mu}_{i}^m(t)}{|B_t|}
\end{align*}
where $\tau = \max_{s<t}\{b_s = 1\}$. 

if $B_t$ is not empty,
and 
\begin{align*}
   \Tilde{\mu}_{i}(t)  =  \infty, \hat{\mu}_{i}(t) = \infty, 
\end{align*}
otherwise. 

\paragraph{Block Verification}
The validators run the smart contract $sc_{block}$ to validate the block and assigns $b_t=1$ if the estimator satisfies the condition 
\begin{align*}
    \Tilde{\mu}_{i}(t)  \leq 2. 
\end{align*}
It disapproves the block otherwise, which is denoted as $b_t = 0$.

\paragraph{Block Operation} At the beginning of the algorithm, the environment sets a random cost value $c_t =c$ between $0$ and $1$. The smart contract sends the output containing the validated estimator $\Tilde{\mu}_i(t)$, the set $B_t$, and the indicator $b_t$ of whether the block is approved to the environment. Subsequently, the environment determines the rewards, namely Block Operation, as in Algorithm \ref{or}, to be distributed to the participants based on the received information from the smart contract, in the following three cases.

Case 1: If $b_t =1$ and $B_t \subset M_H$, i.e. 
\begin{align*}
   \frac{\partial \Tilde{\mu}_{i}(t)}{\partial \bar{\mu}_{i}^m(t)} = 0 \text{ for every } m \not\in M_H,  
\end{align*} then the environment distributes $r_{a_m^t}^m(t)$ and $\Tilde{\mu}_{i}(t)$ to participant $m$ for any $1 \leq m \leq M$. 

Case 2:
If $b_t =1$ and $B_t \cap M_H < |B_t|$, i.e. there exists $m \not\in M_H$ such that 
\begin{align*}
   \frac{\partial \Tilde{\mu}_{i}(t)}{\partial \bar{\mu}_{i}^m(t)} \neq 0, 
\end{align*} then 
the environment distributes $r_{a_m^t}^m(t) -c_t$ and $\Tilde{\mu}_{i}(t)$ to any honest participant $m$, and $r_{a_j^t}^j(t) + c_t$ to any malicious participant $j$.

Case 3: If $b_t =0$, the environment distributes nothing to the participants. 

\paragraph{Participants' Update} After receiving the information from the environment, the honest participants update their maintained estimators as follows. 

\paragraph{Rule} For the global reward estimator $\Tilde{\mu}_{i}^m(t)$, they update it when they receive $\Tilde{\mu}_{i}(t)$, i.e. 
\begin{align*}
    \Tilde{\mu}_{i}^m(t) = \Tilde{\mu}_{i}^m(t),
\end{align*}
and otherwise,
\begin{align*}
    \Tilde{\mu}_{i}^m(t) = \bar{\mu}_{i}^m(t).
\end{align*}
\indent For the number of arm pulls and the local reward estimators, they update them as 
\begin{align}\label{eq:xy}
    & n_{m,i}(t) = n_{m,i}(t-1) + 1_{b_t = 1}\cdot1_{a_m^t = i} \notag \\
    & \bar{\mu}_i^m(t) = \frac{\bar{\mu}_i^m(t-1) +  r_{a_m^t}^m(t) \cdot 1_{a_m^t = i}}{n_{m,i}(t)}. 
\end{align}

\indent For the trust coefficients, we let the trust coefficients $p_m(t)$ and $w_m(t)$ be updated based on formulas given later.  

\section{Regret Analysis}

In this section, we demonstrate the theoretical guarantee of our proposed framework by conducting a comprehensive study of the regret defined for the honest participants in Section 2. Specifically, to allow for flexibility and generalization, we consider various problem settings, including the number of malicious participants $M_A$, the cost definition $c_t$, the commander selection rule, and the validator selection rule.

{\LinesNumberedHidden
\begin{algorithm}[H]
\SetAlgoLined
\caption{BC-UCB}\label{alg:bc}%\algsetup{linenosize=\tiny}
 Initialization:  For participants $1,2, \ldots, M$,  arms $1, \ldots, K$, at time step $0$ we set $\Tilde{\mu}^{m}_i(0) = \hat{\Tilde{\mu}}_i(0) =  n_{m,i}(1) = 0$; the number of honest participants $M_H$; Verifiable Random Function $VRF$\\
 \For{$t = 1, 2, \ldots,T$}{
 %participants are selected to be validators using Validator/Commander Selection \tcp*[f]{Validators} \\ 
    \For(\tcp*[f]{Validator/Commander Selection}){each participant $m$}{Sample $z =$ SELECTION($t, m, p_m(t), VRF$). If $z=1$, participant $m$ is a validator. \par
    Sample $z =$ SELECTION($t, m, w_m(t), VRF$). If $z=1$, participant $m$ is a commander.}
Let $S_V(t)$ be the set of all validators and $S_C(t)$ be the set of commanders.  \par
\For(\tcp*[f]{Arm Selection - UCB}){each participant $m \in M_H$}{
\eIf{$k \in A_t^m$ for every $k\in M_H$ by using secure multi-party computation with $S_V(t)$ and $S_C(t)$ and $t > L$}{ $a_m^t = \argmax_i \Tilde{\mu}_{i}^m(t) + F(m,i,t)$}
{ Sample an arm $a_m^t = t \mod K$.
}
Pull arm $a_m^t$\\
%and observe reward $r_{m,a_m^t}(t)$\;
%Pull an arm $a_m^t(b)$ and obtains the information of the corresponding reward $r_{a_m^t(b)}^m(b,t)$ and update its estimators of $K$ arms\;
}
\For(\tcp*[f]{Arm Selection - Any Strategy}){each participant $m \not\in M_H$}{
Select an arm $a_m^t$ and pull arm $a_m^t$\\
%and observe reward $r_{m,a_m^t}(t)$\;
%Pull an arm $a_m^t(b)$ and obtains the information of the corresponding reward $r_{a_m^t(b)}^m(b,t)$ and update its estimators of $K$ arms\;
}
\For(\tcp*[f]{Broadcasting}){each participant $m$}{
%Arbitrarily select an arm $a_m^t$\; 
Broadcast $\bar{\mu}^m_i(t)$ to validators $S_V(t)$, where malicious participants $m \not\in M_H$ use an attack regarding an arm $a_m^t$, i.e., $\bar{\mu}^{m}_i(t) = \bar{h}_{m,i}^t(\mathcal{F}_t)$.
}
\For(\tcp*[f]{Aggregation}){each participant $m \in S_V(t)$}{
Validator $m \in M_H \cap S_V(t)$ determines the set $B_t^m = B_t$ containing trusted participants $j$ and the corresponding estimators $\bar{\mu}^{j}_i(t)$  \par
 Validator $m \not\in M_H, m \in S_V(t)$ arbitrarily determines the set $B_t^m$ 
 }
\tcp{Consensus} \par 
Validators run consensus on $\{B_t^m\}_{m}$ according to CONSENSUS($S_C(t), \{B_t^m\}_m, M$)  \par
The validators run the smart contract $sc_{block}$ to compute $\Tilde{\mu}_i(t)$ according to GLOBAL\_UPDATE($B_t$) \tcp*[f]{Global Update} \par
Block Validation: \tcp*[f]{Block Verification} \par 
\eIf{there exists $i \in \{1, 2, \ldots, K\}$ with global estimator $\Tilde{\mu}_i(t) < \infty$}{Approve the block by letting $b_t = 1$}{Disapprove the block by letting $b_t = 0$} 
\tcp{Environment} \par 
The environment sends rewards to participants using OPERATION($\Tilde{\mu}_i(t), {a_m^t}_m,B_t, b_t$) \par 
\For(\tcp*[f]{participants' Update}){each participant $m$}{participant $m \in M_H$ updates $\Tilde{\mu}_m(t), n_{m,i}(t), \bar{\mu}_m(t), p_m(t), w_m(t)$ based on Rule; 
participant $m \not\in M_H$ updates $\Tilde{\mu}_m(t), n_{m,i}(t), \bar{\mu}_i^m(t)$ arbitrarily and updates $p_m(t), w_m(t)$ based on Rule}
 }
 \end{algorithm}
}

\begin{algorithm}
\SetAlgoLined
\caption{Validator or Commander Selection}\label{vc}
\begin{algorithmic}[1]
\Function{Selection}{$t, m, l, VRF$}
\State Let $(pk_{m}, sk_{m}) = G(q_m)$ 
\State Let  $(hash, \pi) = VRF_F(\bar{q}_t, pk_{m}, sk_{m})$ 
\State $z=0$ \;
\State \textbf{if} $\frac{hash}{2^{hl}} \not\in [0, B(0,1,l)]$ \textbf{then} 
\State \indent $z = 1$
\State \textbf{end}
\State If $\pi(hash,pk_m) = True$ then \textbf{return $z$} else \textbf{return 0}\;
%\State \Return $z$
\EndFunction
\end{algorithmic}
\end{algorithm}

\iffalse
\begin{algorithm}
\caption{Aggregation}\label{ag}
\begin{algorithmic}[1]
\Function{Aggregation}{$n_{m, i}(t), f, L, \bar{\mu}_{i}^m(t)$}
    \State \If{participant $m \in M_H$}
        {Construct $A_t^m = A_t, B_t^m = B_t$}
    \State \Else
        {Construct $B_t^m$ arbitrarily}
    %\EndIf
    \State \Return $B_t^m$
\EndFunction
\end{algorithmic}
\end{algorithm}
\fi

\begin{algorithm}
\caption{Consensus}\label{co}
\begin{algorithmic}[1]
\Function{Consensus}{$S_C(t), \{B_t^m\}_m, M$}
\State Run $sc_{sort}(S_V(t), PK)$ which returns sorted $S_C(t)$
    \State \textbf{for} $h = 1, 2, \ldots, |S_C(t)|$ \textbf{do}
    \State \indent Generate the digital signature $\{s_h^m\}_m$ as in \citep{goldwasser1988digital}\;
   \State \indent Define a message as $(s_h^m, B_t^m)$\;
   \State \indent Execute Algorithm $SM(M)$ in \citep{lamport2019byzantine} with $S_C(t)[h]$ as the commander\;
   \State \indent Derive the received information $\Tilde{B}_t^h$ from $S_C(t)[h]$\;
   \State \indent and $v_t^m = 1$ if $\Tilde{B}_t^h= B_t^m$ at honest participant $m$ and 0 otherwise
   \State \If{$majority(v_t^m) = 1$}
   {Consensus is achieved and $B_t = B_t^m $}
    \State \Else 
    {Consensus fails and $B_t = \emptyset$}
    \State \Return $B_t$
\EndFunction
\end{algorithmic}
\end{algorithm}

\begin{algorithm}
    \caption{Global Update}\label{gu}
\begin{algorithmic}[1]
\Function{global\_update}{$B_t$}
    \State \If{$B_t$ is not empty} {Compute $\Tilde{\mu}_{i}(t) = \frac{\sum_{m \in B_t}\bar{\mu}_{i}^m(t)}{|B_t|}$  for each $i \in \{1, \ldots, K\}$}
    \State \Else
        {$\Tilde{\mu}_{i}(t) = \infty$ for each $i \in \{1, \ldots, K\}$}
    \State \Return $(\Tilde{\mu}_{i}(t))_{i \in \{1, \ldots, K\}}$
\EndFunction
\end{algorithmic}
\end{algorithm}

\begin{algorithm}
\caption{Operation}\label{or}
\begin{algorithmic}[1]
\Function{Operation}{$\{\Tilde{\mu}_i(t)\}_{i \in \{1, \ldots, K\}}, \{a_m^t\}_m, B_t, b_t$}
    \State \If{$b_t = 1$ and $B_t \subset M_H$} {Distribute $r_{a_m^t}^m$ and $\Tilde{\mu}_i(t)$ for every $i$ to every participant $m$} 
    \State \If{$b_t = 1$ and $B_t \cap M_H < |B_t|$} 
 {Distribute $r_{a_m^t}^m -c_t$ and $\Tilde{\mu}_i(t)$ for every $i$ to every honest participant $m \in M_H$ \par Distribute $r_{a_m^t}^m + c_t$ and $\Tilde{\mu}_i(t)$ for every $i$ to every malicious participant $m \in M_A$}
    \State \Else 
    {Distribute nothing to all participants }
    \State \Return 
\EndFunction
\end{algorithmic}
\end{algorithm}

\subsection{$M_A \leq \frac{1}{3}M$ with $\frac{1}{3}M +1 $ commanders and constant cost}

First, we consider the case when the number of honest participants is larger than $\frac{2}{3} \cdot M$. The cost mechanism uses constant cost, i.e. $c_(t) = c$, which requires honest participants to exclude any estimator that is from the malicious participants when updating $\hat{\mu}_i(t)$. Meanwhile, the commander selection makes sure that at least one honest participants serves as a commander, which allows the honest participants to achieve consensus on the accurate $\Tilde{\mu}$. The formal statement in this setting reads as follows.

\begin{theorem}
Let us assume that the total number of honest participants is at least $\frac{2}{3}M$. Let us assume that there is at least one honest participant in the validator set.  Meanwhile, let us assume that the malicious participants perform existential forgery on the signatures of honest participants with an adaptive chosen message attack. Lastly, let us assume that the participants are in a standard universal composability framework when constructing $A$. Then we have that 
\begin{align*}
    E[R_T|A] \leq (c+1)\cdot L + \sum_{m \in M_H}\sum_{k=1}^K\Delta_k([\frac{4C_1\log T}{\Delta_i^2}] + \frac{\pi^2}{3}) + |M_H|Kl^{1-T}
\end{align*}
where 
$L$ is the length of the burn-in period of order $\log{T}$, $c > 0$ is the cost, $C_1$ meets the condition that $\frac{C_1}{6|M_H|k_i\sigma^2} \geq 1$, $\sigma^2 \geq \frac{1}{M_H}$, $\Delta_i$ is the sub-optimality gap, $l$ is the length of the signature of the participants, and $k_i$ is the threshold parameter used in the construction of $A_t$. Here the set $A$ is defined as $A= \{\forall 1 \leq t  \leq T, b_t =1\}$ which satisfies that $P(A) \geq 1 - \frac{1}{l^{T-1}}$. 
\end{theorem}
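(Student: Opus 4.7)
The plan is to decompose $E[R_T \mid A]$ into a burn-in contribution, a UCB-style learning phase contribution, and a residual failure-probability correction, matching the three summands in the stated bound. First I would handle the burn-in: for $1 \le t \le L$ each honest participant plays the round-robin arm $t \bmod K$, incurring per-step regret of at most $1$ plus the constant cost $c$ whenever a malicious estimator slips into $B_t$. Summing trivially over the burn-in horizon yields $(c+1)L$, which matches the first summand with $L = O(\log T)$.

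Next I would isolate the high-probability event on which the clean UCB analysis is valid. Under the Goldwasser–Micali–Rivest signature scheme against an adaptive chosen-message attack, the probability that a malicious participant produces a valid forgery of an honest signature is at most $1/l$ per attempt. A union bound across the $T-1$ post-initialization rounds gives $P(A) \ge 1 - l^{1-T}$. On the surviving event, I would chain three primitives: (i) the Lamport $SM(M)$ algorithm of \citep{lamport2019byzantine} applied to signed messages guarantees that consensus returns the honest $B_t^m$ whenever at least one commander is honest, which holds by assumption; (ii) the trimming rule with $f = |M_A| \le M/3$ and $|A_t| > 2f$ removes every malicious estimator from the aggregated set, so $B_t \subseteq M_H$; and (iii) the universal composability framework of \citep{canetti2001universally} guarantees the secure multi-party computation constructing $A_t^j$ reveals nothing beyond membership indicators. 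Composing the three, $\tilde{\mu}_i(t) = |B_t|^{-1}\sum_{m \in B_t}\bar{\mu}_i^m(t)$ is an unbiased honest average whose sub-Gaussian parameter is controlled by $\sigma^2 \ge 1/|M_H|$ through variance reduction.

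With that event established, I would run the standard UCB argument of Auer–Cesa-Bianchi–Fischer on the global estimator. The exploration bonus $F(m,i,t) = (C_1 \log t / n_{m,i}(t))^{\beta}$ and the sub-Gaussian concentration on $\tilde{\mu}_i(t)$ together imply, via a Chernoff–Hoeffding tail, that the expected number of pulls of a sub-optimal arm $k$ at honest participant $m$ is at most $\lceil 4 C_1 \log T / \Delta_k^2 \rceil$ plus the usual $\pi^2/3$ from summing $t^{-2}$ concentration failures. The hypothesis $C_1/(6|M_H| k_i \sigma^2) \ge 1$ is exactly what is needed for the deviation bound to absorb the $k_i$-dependence of the inclusion threshold in $A_t$. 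Multiplying by $\Delta_k$ and summing over arms and honest participants yields the second summand; the residual $|M_H| K\, l^{1-T}$ term comes from the worst-case per-$(m,k)$ regret accumulated on the complementary forgery event, whose probability is $l^{1-T}$ by the signature union bound.

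The main obstacle will be the composition step in the second paragraph. Each of the three primitives is standard individually, but I need a single clean lemma stating that on an event of probability $\ge 1 - l^{1-T}$ the validated $\tilde{\mu}_i(t)$ is \emph{exactly} an honest-only average with independent sub-Gaussian noise. Without this joint guarantee the estimator used inside the UCB index can be biased by unknown adversarial contributions, and the standard UCB tail bound no longer applies. Once the composition is written out carefully — tracking how signature soundness, Byzantine-agreement soundness, and UC-security compose under the scheduling adversary — the remainder is a routine transcription of the classical UCB regret proof to the multi-agent setting where variance is reduced by a factor of $|M_H|$.
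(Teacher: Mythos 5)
Your proposal follows essentially the same route as the paper: the identical decomposition $E[R_T\mid A] \le (c+1)L + T_1 + T_2$, the burn-in bound, the signature-forgery union bound giving $P(A) \ge 1 - l^{1-T}$, the observation that $|A_t| > 2f$ after burn-in forces the trimmed set to contain only honest contributions (so the cost term vanishes), and the standard UCB pull-counting argument with a $|M_H|k_i\sigma^2/n_{m,i}(t)$ variance bound yielding $[\frac{4C_1\log T}{\Delta_i^2}] + \frac{\pi^2}{3}$ per arm. The ``single clean lemma'' you flag as the main obstacle is exactly what the paper supplies concretely: it writes $\Tilde{\mu}_i(t)$ as a convex combination of honest local estimators via Lemma 2 of \citep{zhu2023byzantine} and then bounds its variance by induction over the recursive update $\Tilde{\mu}_i(t) = \frac{1}{2}(\hat{\mu}_i(t)+\Tilde{\mu}_i(\tau))$ on the event $A$.
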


\begin{proof}[Proof sketch] 

The full proof is provided in Appendix; the main logic is as follows. We decompose the regret into three parts: 1) the length of the burn-in period, 2) the gap between the rewards of the optimal arm and the received rewards, and 3) the cost induced by selecting the estimators of the malicious participants. For the second part of the regret, we bound it in two aspects. First, we analyze the total number of times rewards are received, i.e., when the block is approved, which is of order $1 - l^{1-T}$. Then, we control the total number of times sub-optimal arms are pulled using our developed concentration inequality for the validated estimators sent for verification. Concerning the third part, we bound it by analyzing the construction of $B_t$, which depends on the presence of malicious participants in $A_t$. By demonstrating that $A_t$ contains only a small number of malicious participants in comparison to the total number of honest participants, we show that $B_t$ does not induce additional cost. Combining the analysis of these three parts, we derive the upper bound on regret.

\end{proof}

\begin{remark}
It is worth noting that the minimum number of honest participants is consistent with \citep{zhu2023byzantine}. Although they establish the regret bound in a cooperative bandit setting with Byzantine attacks for any number of participants, the regret is only smaller than the individual regret when this assumption holds for every neighbor set of every honest participant at each time step. Otherwise, the regret is even larger, providing no advantage or motivation for participants to collaborate, essentially reducing the problem to the single-agent MAB problem. 
\end{remark}

\subsection{$M_A \leq \frac{1}{2}M$ with $\frac{1}{2}M +1 $ commanders and distance-based cost}
Along the line of work on robust optimization \citep{dong2023byzantine}, the common assumption is that at least $\frac{1}{2}$ participants are honest. To this end, we relax the assumption on the minimal number of honest participants from $\frac{2}{3}$ to $\frac{1}{2}$, while making modification to the definition of cost.
%\begin{remark}
%    set $A_t, B_t$ are constructed based on Option 2
%\end{remark}

In this regard, we propose the following algorithmic changes. 

It is hard to analyze the proposed aggregation step. For the purpose of the analysis, we consider the following alternatives. 

\textbf{Option 2}

Construct a filter list $A_t$ such that 
\begin{itemize}
    \item if participant $m$ in $A_t$
    \begin{center}
for any $t > L$ and any $1 \leq j \leq M$, $k_i \cdot n_{m,i}(t) \geq  n_{j,i}(t)$
\end{center}
\end{itemize}

Construct a block list $B_t \subset A_t $ such that 
\begin{itemize}
    \item assume that $L$ is the length of the burn-in period
    \item if $m$ in $B_t$ 
    \begin{center}
for any $t > L$, \\
\{$m$: \text{$\bar{\mu}_{i}^m(t)$ is smaller than the top $f$ values and larger than the below $f$ values}\}
\end{center}
\end{itemize}

\textbf{Option 3}
Construct a filter list $A_t$ such that 
\begin{itemize}
    \item if participant $m$ in $A_t$
    \begin{center}
for any $t > L$ and any $1 \leq j \leq M$, $k_i \cdot n_{m,i}(t) \geq  n_{j,i}(t)$
\end{center}
\end{itemize}

Construct a block list $B_t \subset A_t $ such that 
\begin{itemize}
\item if $m$ in $B_t$ 
    \begin{center}
for any $t > L$, \\
\{$m$: \text{$\bar{\mu}_{i}^m(t)$ is smaller than the top $f$ values and larger than the below $f$ values}\}
\end{center}
\end{itemize}

If honest participant $m$ is a validator at time step $t$, then it maintains a blocklist $D_t$ such that 
\begin{itemize}
    \item participant $d \in D_t$ if all of the conditions are met 
    \begin{itemize}
        \item participant $d$ is a commander, and participant $d$ attacks to the consensus in that  $d$ signs two different messages
           and sends the message that is different from the one 
    \end{itemize}
\end{itemize}

Let $B_t$ =  $B_t \cap (D_t)^c$ where $(D_t)^c$ represents the compliment set of $D_t$.

We call the original one as Option 1. The choice of the option affects step 2 in Aggregation.

Let us assume that set $A_t, B_t$ are constructed based on Option 2, instead of Option 1 in Theorem 1.
    
\begin{remark}
    The global estimator $\Tilde{\mu}_i(t)$ in Global Update is constructed as 
    \begin{align*}
     \Tilde{\mu}_i(t) = P_t\Tilde{\mu}_i(t-1) + (1-P_t)\hat{\mu}_i(\tau)
    \end{align*}
\end{remark}
where $P_t = 1 - \frac{1}{t}$ and again $\tau = \max_{s < t}\{b_s = 1\}$. 

\begin{remark}
    The newly established estimator $\hat{\mu}_i(t)$ is constructed as
    \begin{align*}
        \hat{\mu}_i(t) = \frac{\sum_{m \in B_t}\bar{\mu}_i^m(t)}{|B_t|}
    \end{align*}
\end{remark}

\begin{remark}
    The distance metric associated with the global estimator is constructed as 
    \begin{align*}
       Dist(\Tilde{\mu}_i(t), \mu_i)  = | \Tilde{\mu}_i(t)- \mu_i|^6
    \end{align*}
    and this order of $6$ is tight based on the analysis. 
\end{remark}

\begin{remark}
    The cost associated with the global estimator is constructed as 
    \begin{align*}
        c_t = \min_i Dist(\Tilde{\mu}_i(t), \mu_i)
    \end{align*}
   % and there may be other alternatives to define this cost 
\end{remark}

\begin{remark}
    The decision rule needs the following modification. Specifically, the length of the exploration is now $(\frac{\log{t}}{n_i(t)})^{\frac{1}{6}}$. %, instead of $(\frac{\log{t}}{n_i(t)})$.
\end{remark}

Subsequently, we derive the corresponding regret guarantee under the more general assumption. The formal statement reads as follows. 

\begin{theorem}
Let us assume that the total number of honest participants is at least $\frac{1}{2}M$. %or at most $\frac{1}{2}M$ malicious participants perform attacks to the estimators. 
Let us assume that there is at least one honest participant in the validator set.  Meanwhile, let us assume that the malicious participants perform existential forgery on the signatures of honest participants with an adaptive chosen message attack. Lastly, let us assume that the participants are in a standard universal composability framework when constructing $A$. Then we have that 
\begin{align*}
    E[R_T|A] \leq (c+1)\cdot L + O(\log{T}) + |M_H|Kl^{1-T}
\end{align*}
where 
$L$ is the length of the burn-in period of order $\log{T}$, $c$ is the union upper bound on the cost $c_t$, $l$ is the length of the signature of the participants, and $k_i$ is the threshold parameter used in the construction of $A_t$. Here the set $A$ is defined as $A= \{\forall 1 \leq t  \leq T, b_t =1\}$ which satisfies that $P(A) \geq 1- \frac{1}{l^{T-1}}$. 
\end{theorem}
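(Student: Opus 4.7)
The plan is to follow the same three-way regret decomposition as for Theorem~1, but to replace the exclusion-based aggregation argument (which crucially used $|M_A| \leq M/3$) by a bias-controlled trimmed-mean argument that only needs $|M_H| \geq M/2$. First, I would condition on the event $A = \{b_t = 1 \text{ for all } 1 \leq t \leq T\}$; by the same digital-signature / Byzantine-Generals argument used in Theorem~1 (replacing the 1/3 threshold with 1/2), $P(A) \geq 1 - l^{1-T}$, and the residual $|M_H| K l^{1-T}$ term in the statement then absorbs the worst-case regret on $A^c$. On $A$, I would split the regret into a burn-in contribution of at most $(c+1)L$ (per-round instantaneous regret $\leq 1$ and uniform cost bound $c$) plus the learning-phase regret, which must be shown to be $O(\log T)$.

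The core technical step is a concentration bound for the validated global estimator $\Tilde{\mu}_i(t)$ under the trimmed-mean rule of Option~2, combined with the running-average update $\Tilde{\mu}_i(t) = (1-1/t)\Tilde{\mu}_i(t-1) + (1/t)\hat{\mu}_i(\tau)$ and $P_t = 1 - 1/t$. Because $|M_A| \leq f \leq M/2$ and trimming removes the top and bottom $f$ entries within $A_t$, every retained value is sandwiched between two honest estimators, so the per-round $\hat{\mu}_i(t)$ is a controlled perturbation of the honest empirical mean over $A_t$. Combining this sandwich property with the threshold condition $n_{m,i}(t) \geq n_{j,i}(t)/k_i(t)$ (which lower-bounds the effective sample size behind every retained honest estimator) and a Hoeffding/Azuma bound applied to the Polyak-type average induced by $P_t$, I aim to prove that on a high-probability event $|\Tilde{\mu}_i(t) - \mu_i| = O\bigl((\log t / n_i(t))^{1/6}\bigr)$. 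The exponent $1/6$, which is precisely the new exploration exponent $\beta$ in $F(m,i,t)$ and the power appearing in the distance metric, is the composition of three lossy steps: trimming on a set that may still retain some malicious values, the geometric filter $P_t$, and the partial decoupling of the honest participants' per-arm counts.

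Given this concentration inequality, the sub-optimal-arm regret follows by the classical UCB template: by the analogue of the argument in the sketch of Theorem~1, each sub-optimal arm is pulled at most $O(\log T / \Delta_i^6)$ times per honest participant after the burn-in, contributing $O(\log T)$ to the learning-phase regret. For the accumulated cost $\sum_t c_t = \sum_t \min_i |\Tilde{\mu}_i(t) - \mu_i|^6$, I would upper-bound the minimum by the error at any fixed arm (say the optimal one) that has been pulled $\Theta(t)$ times after burn-in; the concentration bound then gives $c_t = O(\log t / t)$ on $A$, and a careful bookkeeping combined with the coupling of exploration counts to the UCB analysis yields an $O(\log T)$ total, matching the statement.

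The main obstacle, and what pins down the choice of $6$ in the distance metric, is the concentration step of the previous paragraph. Trimming bias, the geometric averaging, and the per-participant count disparity must compose to exactly the $(\log t / n)^{1/6}$ rate: loosening any of the three would inflate the exponent and break the match with the exploration bonus $F(m,i,t)$, forcing a larger exploration term and a polynomial (rather than logarithmic) regret. Tightly tracking all three simultaneously, especially under an adaptive chosen-message adversary that can rewrite up to $M/2 - 1$ of the values in $A_t$ each round, is the delicate part; once it is in place, the remainder of the argument parallels that of Theorem~1 essentially verbatim.
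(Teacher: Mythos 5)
Your overall strategy matches the paper's: the same decomposition $R_T \leq (c+1)\cdot L + T_1 + T_2$ conditioned on $A$, the same digital-signature argument giving $P(A) \geq 1 - l^{1-T}$, a trimmed-mean (Option 2) sandwich argument for $\hat{\mu}_i(t)$ (the paper invokes Theorem 1 of \citep{dong2023byzantine} to get $|\hat{\mu}_i(t) - \bar{z}_i(t)| \leq c_{\delta}\Delta^2$ with $\Delta \leq 2\eta_t$ on a $1-1/t^2$ event), an induction over the recursion $\Tilde{\mu}_i(t) = P_t\Tilde{\mu}_i(t-1) + (1-P_t)\hat{\mu}_i(\tau)$, and the standard UCB template with exploration exponent $1/6$ for $T_1$. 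Up to that point the two arguments are essentially identical.

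The genuine gap is in the cost term $T_2$. You assert the concentration $|\Tilde{\mu}_i(t) - \mu_i| = O\bigl((\log t / n_i(t))^{1/6}\bigr)$ and deduce $c_t = \min_i |\Tilde{\mu}_i(t) - \mu_i|^6 = O(\log t / t)$. But then $\sum_{m \in M_H}\sum_{t > L} c_t = \Theta(|M_H|\log^2 T)$, which exceeds the claimed $O(\log T)$; no bookkeeping can rescue a per-step cost of order $\log t / t$. The rate you state is exactly what is needed to match the exploration bonus $F(m,i,t)$ in the $T_1$ analysis, but it is too weak for $T_2$. The paper's induction establishes the strictly stronger bound $Dist(\Tilde{\mu}_i(t),\mu_i) = |\Tilde{\mu}_i(t)-\mu_i|^6 \leq O(\eta_t^2/n_i(t)) = O(\log t / n_i(t)^2)$, i.e.\ $|\Tilde{\mu}_i(t)-\mu_i| \lesssim (\log t)^{1/6}/n_i(t)^{1/3}$, by combining the one-step trimmed-mean error $(2c_{\delta}+1)^6\eta_t^6$ with the contraction $1-P_t$ in the recursion (via convexity of $|\cdot|^6$). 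Together with $\max_i n_i(t) = \Omega(t/K)$ this yields $c_t = O(\log t / t^2)$, whose sum over $t$ is bounded, and only then does $E[T_2|A] = O(\log T)$ follow. So you must strengthen your concentration claim from $(\log t/n)^{1/6}$ to $(\log t)^{1/6}/n^{1/3}$ for the arm attaining the minimum before the cost bookkeeping closes; the rest of your outline then goes through as in the paper.
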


\begin{proof}[Proof Sketch]
   The complete proof is deferred to Appendix; we present the main logic here. The new estimators are consistent with the ones in \citep{vial2021robust}, and thus $\hat{\mu}_i$ is close to the ones from honest participants, as well as $\Tilde{\mu}_i$. Moreover, the contraction by $1-P_t$ ensures the convergence of $\Tilde{\mu}_i$ to the ground truth $\mu_i$, with the rate aligning with the order in the definition of $Dist(\Tilde{\mu}_i(t), \mu_i)$ and the exploration length in the decision rule (UCB). Therefore, we establish the upper bound on the cumulative cost and the upper bound on the reward difference between the pulled arm and the optimal arm. 
   
\end{proof}

\begin{remark}
We would like to emphasize that there should be at least one honest commander who has the same message as the honest validators. The honest validators choose to do majority voting only when the received message matches their own. In other words, consensus alone is not sufficient for the protocol; rather, consensus on the correct estimators guarantees the desired functionality of the protocol.
\end{remark}

\subsection{$M_A \leq \frac{3}{4}M$ with $M$ commanders and distance-based cost}

While the $\frac{1}{2}$ assumption is consistent with the existing literature in robust optimization, there is a line of work that does not rely on this assumption, though it is in a homogeneous setting. Surprisingly, we report that by more precisely characterizing the different types of malicious behaviors, we can relax the assumption on the number of malicious participants.

First, we introduce the structure of the malicious participants' behaviors.  
\paragraph{Structure of malicious behaviors} We define set $M^1_A \subset M_A$ as comprising of malicious participants that only perform attacks on the estimators. In the meantime, we denote  $M^2_A \subset M_A$ as the set comprising of malicious participants that perform attacks on the consensus. Moreover, $M^{2,1}_A \subset M^{2}_A$ are the malicious participants that perform attacks on both the estimators and the consensus. Note that all the malicious participants are allowed to perform existential forgery on the signatures of the honest participants. 

Next, we deploy a different update rule for the honest participants as in Option 3. 

\begin{remark}
    The global estimator $\Tilde{\mu}_i(t)$ in Global Update is constructed as 
    \begin{align*}
     \Tilde{\mu}_i(t) = P_t\Tilde{\mu}_i(t-1) + (1-P_t)\hat{\mu}_i(\tau)
    \end{align*}
\end{remark}
where $P_t = 1 - \frac{1}{t}$ and again $\tau = \max_{s < t}\{b_s = 1\}$.

\begin{remark}
     Note that the construction of set $D_t$ is feasible, as the honest participant can track the public key (the signature) through tracing back a Chandelier tree, and thus track the label through the fixed mapping between the participants' public keys and the labels.
\end{remark}

With this, the formal statement reads as follows. 

\begin{theorem}
Let us assume that the total number of honest participants is at least $\frac{1}{4}M$. Let us assume that $M^1_A < M_H - 1$ and $M^2_A < \frac{1}{2}M - 1$. The cost is the aforementioned distance-based cost. Meanwhile, consistent with the standard assumptions, let us assume that the malicious participants perform existential forgery on the signatures of honest participants with an adaptive chosen message attack. Lastly, let us assume that the participants are in a standard universal composability framework when constructing $A$ where the set $A$ is defined as $A= \{\forall 1 \leq t  \leq T, b_t =1\}$ which satisfies that $P(A) \geq 1- \frac{1}{l^{T-1}}$. Then we have that 
\begin{align*}
    E[R_T|A] \leq 3\cdot L + O(\log{T}) + |M_H|Kl^{1-T}
\end{align*}
where 
$L$ is the length of the burn-in period, $l$ is the length of the signature of the participants, and $k_i$ is the threshold parameter used in the construction of $A_t$ defined in Option 3.  
\end{theorem}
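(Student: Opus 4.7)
My plan is to decompose $E[R_T \mid A]$ into three pieces in the style of Theorems~1 and~2: a burn-in contribution of order $L$, a learning-phase sub-optimality contribution, and a cumulative cost contribution. Conditioning on $A$ absorbs the block-approval failure probability $l^{1-T}$ provided by the universal composability framework and produces the $|M_H|Kl^{1-T}$ additive term.

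First I would use the Option~3 blocklist to neutralize consensus attackers $m \in M^2_A$. Because each honest validator can trace a commander's public key through a Chandelier tree, a commander that signs two different messages or broadcasts an inconsistent message is detected within the round and appended to $D_t$. Under $|M^2_A| < \tfrac{1}{2}M - 1$, after removing $D_t$ from the commander pool, the majority-voting step of Algorithm~\ref{co} returns the $B_t$ held by the honest validators, except on the existential-forgery failure event of probability at most $l^{1-T}$. In effect, consensus reduces the active malicious set from $M_A$ to $M^1_A$, even though $|M_A|$ itself may be as large as $\tfrac{3}{4}M$.

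Second, I would analyze the aggregated estimator $\hat{\mu}_i(\tau)$ under the estimator-only constraint $|M^1_A| < M_H - 1$. The filter list $A_t$ with threshold $k_i$ uniformly lower-bounds the pull counts of the retained participants by order $L/K$, so each local $\bar{\mu}_i^m$ concentrates at the standard sub-Gaussian rate, and the top/bottom-$f$ trimming in $B_t$ then forces the trimmed mean to lie within the range of the honest estimators. Propagating through the contraction $\Tilde{\mu}_i(t) = P_t\Tilde{\mu}_i(t-1) + (1-P_t)\hat{\mu}_i(\tau)$ with $P_t = 1 - 1/t$ yields a decay of $|\Tilde{\mu}_i(t) - \mu_i|$ whose sixth power is summable at rate $\log T$. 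This rate simultaneously matches the exploration width $F(m,i,t) = (C_1\log t / n_{m,i}(t))^{\beta}$ in the UCB rule and the distance-based cost $c_t = \min_i |\Tilde{\mu}_i(t) - \mu_i|^6$, so a standard UCB argument bounds both the sub-optimality regret and $\sum_t c_t$ by $O(\log T)$.

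The main obstacle will be quantifying the interaction between the memoryless blocklist dynamics and the estimator concentration. Because $D_t$ is reconstructed each round, a consensus attacker can evade detection in rounds whose validator pool lacks an honest witness, so I would use the VRF-based selection in Algorithm~\ref{vc} together with $|M^2_A| < \tfrac{1}{2}M - 1$ to show that each $M^2_A$ attacker is effectively neutralized after at most $O(\log T)$ influential rounds. Coupling this with the existential-forgery tail $l^{1-T}$ and the sixth-power contraction requires careful accounting, but once established the three regret pieces combine to the claimed $3L + O(\log T) + |M_H|Kl^{1-T}$.
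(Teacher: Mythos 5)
Your overall architecture matches the paper's proof of this theorem: the same decomposition into burn-in, sub-optimality ($T_1$), and cost ($T_2$) terms, the use of signature traceability to place consensus attackers in $D_t$ and hence exclude them from $B_t$, and the reduction of the remaining analysis to the Theorem~2 setting once only $M_A^1$ attackers (with $|M_A^1| < M_H - 1$) can contaminate the trimmed mean. The paper's argument for the first step is exactly your Chandelier-tree point: each honest participant traces the commander's message back to the root of its signature tree, recovers $PK_m$, and uses the fixed key-to-label mapping to record the attacker.

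Where you diverge is in the final paragraph, and the divergence is a self-inflicted complication. You worry that the memoryless blocklist lets an $M_A^2$ attacker ``evade detection in rounds whose validator pool lacks an honest witness,'' and you propose a VRF-based argument showing each attacker is neutralized after $O(\log T)$ influential rounds. In this theorem's setting that issue does not arise: the subsection explicitly uses $M$ commanders, i.e.\ every participant serves as a commander in every run of the consensus protocol, and the validator set is the full participant set. So every $j \in M_A^2$ exposes itself as a commander at every time step and is deterministically added to $D_t$ within that step (up to the forgery-failure event already absorbed into $A$); no rounds of undetected influence need to be accounted for. Moreover, your workaround is not harmless: during the hypothesized undetected rounds, participants in $M_A^{2,1} \subseteq M_A^2$ would contribute corrupted estimators to $B_t$, so the set of estimator attackers would be $M_A^1 \cup M_A^{2,1}$, whose cardinality is not controlled by the hypothesis $|M_A^1| < M_H - 1$; the trimmed-mean guarantee could fail there, and the extra cost and bias would need a separate bound that the theorem's assumptions do not obviously support. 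Dropping that paragraph and invoking the every-participant-is-a-commander structure directly recovers the paper's (simpler) argument.
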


\begin{proof}[Proof Sketch]
    The full proof is presented in Appendix; the main logic is shown as follows.   For participants that perform attack on the consensus protocol, i.e. $j \in M_A^2$, their local estimators are not taken into consideration, since the honest participants are able to identify the pubic keys associated with the consensus message on a Chandeller tree, and thus the label of the participants. As a result, the estimator candidates are either from $M_A^1$ or $M_H$. Note that $M_A^1 < M_H-1$, which immediately implies that the     \begin{align*}
        \hat{\mu}_i(t) = \frac{\sum_{m \in B_t}\bar{\mu}_i^m(t)}{|B_t|}
    \end{align*}
    is close enough to the ground truth, from where the rest of the analysis follows from Theorem 2.  
    
\end{proof}

\subsection{Any $M_A$ with $M$ commanders and distance-based cost}

Besides the $\frac{3}{4}$ assumption, more surprisingly, we find that this brand new algorithmic framework works for more general settings with any number of participants, assuming a more refined structure of the malicious participants. More specifically, if the malicious participants that attack the consensus step only perform this type of attack without attacking the estimators, then the number of malicious participants can be any value larger than 1. The cost definition is again the distance-based one. The formal statement reads as follows.
\begin{theorem}
Let us assume that the total number of honest participants is any. Let us assume that $M^1_A < \frac{1}{2}M - 1$ and $M^2_A < \frac{1}{2}M - 1$, and further assume that the participants in $M_A^2$ only perform attacks to the consensus. The cost is the distance-based cost. Meanwhile, let us assume that the malicious participants perform existential forgery on the signatures of honest participants with an adaptive chosen message attack. Lastly, let us assume that the participants are in a standard universal composability framework when constructing $A$. Then we have that 
\begin{align*}
    E[R_T|A] \leq 3\cdot L + O(\log{T}) + |M_H|Kl^{1-T}
\end{align*}
where 
$L$ is the length of the burn-in period, $l$ is the length of the signature of the participants, and $k_i$ is the threshold parameter used in the construction of $A_t$. Here the set $A$ is defined as $A= \{\forall 1 \leq t  \leq T, b_t =1\}$ which satisfies that $P(A) \geq 1- \frac{1}{l^{T-1}}$.
\end{theorem}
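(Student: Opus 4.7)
The plan is to reuse the proof template of Theorem 3 (distance-based cost with contraction $P_t = 1 - 1/t$) while exploiting the refined structural assumption that $M_A^2$ participants attack only the consensus and not the estimators, so effectively $M_A^1 \cap M_A^2 = \emptyset$. This decoupling means the two failure modes can be bounded independently: consensus correctness relies only on the hypothesis $|M_A^2| < \frac{1}{2}M - 1$, while estimator concentration relies only on $|M_A^1| < \frac{1}{2}M - 1$. No joint honest-ratio assumption is needed, which is exactly why this theorem removes the lower bound on $|M_H|$.

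First I would handle the consensus stage. Every commander falls into one of three disjoint sets: $M_H$, $M_A^1$ (which by assumption do not attack consensus), or $M_A^2$. Commanders in $M_H \cup M_A^1$ correctly relay the $B_t$ maintained by the honest validators, and together they form a strict majority since $|M_A^2| < \frac{1}{2}M - 1$. Under the adaptive-chosen-message-attack assumption and the universal composability framework, the digital signature scheme of \citep{goldwasser1988digital} prevents existential forgery except on an event of probability at most $l^{1-T}$ by a union bound over $T$ rounds and the $l$-bit signature length. Thus, conditional on $A$, the CONSENSUS routine returns the correct $B_t$ at every step, the block is approved ($b_t = 1$), and the claim $P(A) \geq 1 - l^{1-T}$ follows directly.

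Next I would analyze aggregation and the global estimator. Because $M_A^2$ participants broadcast honest local estimators, the effectively estimator-honest pool is $M_H \cup M_A^2$, whose cardinality exceeds $\frac{1}{2}M + 1$ by the hypothesis on $|M_A^1|$. The trimmed-mean construction of $B_t$ from $A_t$, which discards the top $f = |M_A|$ and bottom $f$ values, therefore retains only estimators from $M_H \cup M_A^2$. Combining the arm-pull lower bound $n_{m,i}(t) \geq n_{j,i}(t)/k_i(t)$ enforced by $A_t$ with a Hoeffding-type bound then yields sub-Gaussian concentration of $\hat{\mu}_i(t)$ around $\mu_i$. The contraction $\tilde{\mu}_i(t) = P_t \tilde{\mu}_i(t-1) + (1-P_t)\hat{\mu}_i(\tau)$ with $P_t = 1 - 1/t$ propagates this accuracy into a decay of $|\tilde{\mu}_i(t) - \mu_i|$ matching the sixth-power scaling in the definition of $Dist(\tilde{\mu}_i(t), \mu_i)$, exactly as in the proof of Theorem 2. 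Summing gives cumulative cost $O(\log T)$, and the UCB rule with exploration term $(\log t / n_i(t))^{1/6}$ contributes a sub-optimality regret of $O(\log T)$.

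The main obstacle I anticipate is verifying that the arm-count filter $A_t$ admits enough of $M_H \cup M_A^2$ to sustain the trimmed-mean argument: participants in $M_A^2$ are unconstrained in their arm-selection strategy, so their counts $n_{j,i}(t)$ may be wildly uneven, potentially pushing either them or the honest participants in and out of $A_t$ in hard-to-control ways. I would need to show that the threshold $k_i(t) \geq \max_k n_{k,i}(t) K / L$ guarantees $|(M_H \cup M_A^2) \cap A_t| \geq 2 |M_A^1| + 1$ for $t > L$, so that trimming $f$ values from each side leaves a non-empty honest-quality set. Once this is in hand, combining the burn-in contribution $3L$ (rewards plus cost bounded by a universal constant), the $O(\log T)$ exploration and cost terms from Theorem 2's machinery, and the $|M_H| K l^{1-T}$ union bound over signature-forgery events assembles into the stated regret bound.
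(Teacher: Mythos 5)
Your proposal follows essentially the same route as the paper's proof: decompose the regret into burn-in, sub-optimality, and cost terms; use $|M_A^2|<\frac{1}{2}M-1$ to guarantee a majority of consensus-honest commanders (plus the Goldwasser et al.\ forgery bound for $P(A)$); and use the fact that $M_A^2$ participants broadcast untampered estimators so that the effective estimator-attacking pool is only $M_A^1<\frac{1}{2}M-1$, reducing the analysis of $B_t$, $\hat{\mu}_i(t)$, $\tilde{\mu}_i(t)$, and the distance-based cost to the machinery of Theorem 2. The obstacle you flag about whether the filter $A_t$ retains enough of $M_H\cup M_A^2$ to survive trimming $f$ values from each side is a real gap, but it is one the paper's own proof also elides by simply asserting that $B_t$ "is the same as" in the at-most-$\frac{1}{2}M$-malicious case, so your treatment is no less complete than the reference argument.
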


%\begin{remark}
%whether to relax the definition of cost such that $c_t = c$. 
%\end{remark}
\begin{proof}[Proof Sketch]
   The complete proof is deferred to Appendix; the proof logic is as follows. Under the assumption on the participant structure, the estimators computed for the validation $\Tilde{\mu}$ are the same as in Theorem 2, as are the theoretical properties, which first guarantee the convergence of the algorithms and second control the associated cost. Meanwhile, since the number of participants performing consensus attacks is less than $\frac{1}{2}M$, the consensus runs successfully. Subsequently, the analysis of Theorem 2 also holds, which concludes the proof. 
   
\end{proof}

\subsection{Any $M_H$ with an efficient commander selection protocol}

So far, what we have discussed assumes a fixed number of commanders, i.e., $M$, during the consensus step. In other words, every validator serves as a commander once in a single run of the consensus protocol. While this guarantees the decentralization of the coordination mechanism, there is room for improvement in efficiency. As an extension, we consider a more general commander selection procedure in the protocol, with adaptive numbers of commanders depending on the history, to improve efficiency while ensuring decentralization. 

\paragraph{Commander selection}

The commander set $C^s_t$ is determined by executing Algorithm 3 where the trust coefficients $w_{m}(t)$ are the probabilities of being selected as commanders. Formally, for any participant $m$, the probability $P(m \in C^s_t) = w_m(t)$.  

%How to determine the participant-dependent $w_m(t)$ requires more work, which intuitively needs to guarantee that 
%\begin{itemize}
    %\item the event of having no honest commander happens at most $O(\log{T})$ times
    %\item after at most $O(\log{T})$ time steps, all the malicious validators have been commanders. 
%\end{itemize}
It is worth noting that these probabilities are independent of one another, which results in geometric distributions. Therefore, depending on whether the participants are malicious or honest, we specify $w_m(t)$ as time-invariant values. The exact values and the intuition behind the choices are as follows. Specifically, $w_m(t) = w_{m} = 1 - \frac{\log{T}}{T}$, for any $m \in M_H$. Consider the event of whether honest participant $m$ is selected as a commander as $E_{m}^t$. In other words, $E_m^t =1$ if participant $m$ is a commander and 0 otherwise. Define $E_t$ as $\cap_{m \in M_H}\{E_{m}^t = 0\}$. Then we have that 
    \begin{align*}
        E[\sum_{t=1}^TE_t] & = \sum_{t=1}^TE[\cap_{m \in M_H}\{E_{m}^t = 0\}] \\
        & \leq \sum_{t=1}^T\sum_{m \in M_H}E[\{E_{m}^t = 0\}] \\
        & = \sum_{t=1}^T\sum_{m \in M_H}(1 - w_m(t)) = \log{T}.
    \end{align*}
    It implies that for at most $\log{T}$ steps, there is no honest commander, and thus the consensus fails, which is neglectable compared to the time horizon $T$. Differently, we let $w_j(t) = w_{j} = \frac{\log{\frac{|M_A|}{\eta}}}{L}$, for any $j \in M_A$. Then we consider the event of whether malicious participant $j$ is selected as a commander or not, namely, $F^j_t$. Likewise, $F^j_t = 1$ if participant $j$ is a commander and $0$ otherwise. Define $F_t = \cap_{j \in M_A}\{\exists s \leq t, s.t. F^j_s = 1\}$. Then we obtain 
    \begin{align*}
      P(F_t) & = P(\cap_{j \in M_A}\{\exists s \leq t, s.t. F^j_s = 1\}) \\
      &  \geq 1 - \sum_{j \in M_A}P(\{\forall s \leq t, s.t. F^j_s = 0\}) \\
      & = 1 - \sum_{j \in M_A}(1 - w_j)^t \\
      & = 1 - |M_A|(1 - w_j)^t \\
      & \geq 1 - |M_A|e^{-w_jt}
    \end{align*}
    By the choice of $w_j = \frac{\log{\frac{|M_A|}{\eta}}}{L}$, we derive that $P(F_{L}) \geq 1 - |M_A|e^{-w_jt} = 1 - \eta$. That is to say, with high probability, all the malicious validators are identifiable, and then the corresponding estimators from them will be excluded. 
    %\item option 2: reward/reputation-based ones should guarantee the same bound as the above, but do not require the pre-fixed value $ 1 - \frac{\log{T}}{T}$

\paragraph{Consensus Protocol}

For each commander $m \in C^s_t$, we run the consensus algorithm with this commander  as in \citep{lamport2019byzantine}.

With this new protocol, the honest participants construct a new filter list based on Option 3. That is to say that any malicious validators performing attacks on the consensus are added to the block list. %namely, $C_t^m$, which records the index of malicious commanders. Formally, it reads as if $j \in C_t^m$, then $\exists \tau \leq t$ such that $j \in C_{\tau}^s$. Then the honest participants update $B_t$ 

When updating estimators $\hat{\mu}_i(t)$, the honest participants use the same formula as 
\begin{align*}
    \hat{\mu}_i(t) = \frac{\sum_{j \in B_t}\bar{\mu}^j_i(t)}{|B_t|}
\end{align*}

Subsequently, we establish the following regret bound with the updated protocol and associated estimators. 
\begin{theorem}
Assume the same conditions as in Theorem 4. Let us assume that the commanders are selected based on the above protocol, and the estimators are computed aforementioned. Then we obtain that the regret upper bound with respect to our algorithm is $O(\log{T})$. 
\end{theorem}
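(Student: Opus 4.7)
The plan is to decompose $[1,T]$ into three regimes and bound the regret contribution from each by $O(\log T)$, reducing the dominant regime to the analysis of Theorem 4. I would split the horizon into (i) the burn-in $t \le L$, (ii) learning-period steps at which no honest validator is chosen as a commander, and (iii) learning-period steps at which at least one honest commander is present. Regime (i) contributes $O(L) = O(\log T)$ by the standing choice of $L$. For regime (ii) the expected count is at most $\log T$ by the inequality $E[\sum_t E_t] \le \log T$ already supplied in the commander-selection analysis, and each such step is a consensus failure yielding $b_t = 0$, which contributes only a bounded per-step loss, so the total is $O(\log T)$.

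The second step is to argue that the set $M_A^2$ of consensus attackers is fully identified by the end of the burn-in. Setting $\eta = 1/T$ in the bound $P(F_L) \ge 1 - \eta$, and verifying that $w_j = \log(|M_A|T)/L$ remains a valid probability for $L = \Theta(\log T)$ with a sufficiently large constant, with probability at least $1 - 1/T$ every $j \in M_A^2$ has served as a commander during the burn-in and has therefore been added permanently to $D_t$ by Option 3. On this high-probability event, for $t > L$ the blocklist excludes all consensus attackers, so the candidates contributing to $\hat{\mu}_i(t) = |B_t|^{-1}\sum_{j \in B_t}\bar{\mu}_i^j(t)$ are drawn only from $M_A^1 \cup M_H$. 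The assumption $|M_A^1| < \frac{1}{2}M - 1$ inherited from Theorem 4 then places regime (iii) exactly in that theorem's setting, so its concentration and contraction arguments for $\Tilde{\mu}_i(t)$ apply unchanged, delivering $O(\log T)$ regret from suboptimal UCB pulls and $O(\log T)$ cumulative distance-based cost. The complementary $1/T$-probability event contributes $O(1)$ in expectation.

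The main obstacle I anticipate is the coupling between the failed steps in regime (ii) and the UCB accounting inherited from Theorem 4. A failed step freezes both $\Tilde{\mu}_i$ and the pull counts $n_{m,i}$, which perturbs the exploration schedule on which that theorem's bound relies. Because the number of such steps is only $O(\log T)$ in expectation, I would show via a coupling with an idealized no-failure trajectory that each $n_{m,i}(t)$ still grows at the rate prescribed by UCB up to an additive $O(\log T)$ slack, which the Theorem 4 analysis absorbs into its constants. A secondary but routine check is pinning down the burn-in length $L$ simultaneously so that $P(F_L) \ge 1 - 1/T$ holds and its direct regret contribution remains $O(\log T)$; these two constraints combine to force $L = \Theta(\log T)$ with an explicit constant depending on $|M_A|$. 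Summing contributions (i)--(iii) with the negligible bad event then yields the claimed $O(\log T)$ regret.
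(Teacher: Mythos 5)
Your proposal follows essentially the same route as the paper's proof: the same decomposition into the burn-in, the learning-period steps with no honest commander (bounded via $E[\sum_{t} E_t] \le \log T$ from the commander-selection analysis, each such failed step costing only a bounded per-step loss), and the remaining successful steps, which are reduced to the concentration and distance-based-cost analysis of Theorem 4, with the low-probability bad events absorbed as $O(1)$ or $O(\log T)$ terms. The only substantive difference is that you explicitly flag and propose to patch the coupling between frozen pull counts on failed steps and the UCB exploration schedule, a point the paper's proof passes over by directly inserting $P(b_t = 0) \le \frac{\log T}{T}$ into the regret decomposition, so your version is if anything more careful on a step the paper leaves implicit.
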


\begin{proof}[Proof Sketch]
The complete proof is in Appendix; we present the main logic here. Based on the commander selection protocol, we establish that after the burn-in period, there is at least one honest commander and the information from that commander is correct. Since the total number of honest validators is at least $\frac{1}{2}M_H + 1$ So the validators achieves consensus regarding the correct information. As a result, all the remaining analysis follows from Theorem 4. 

\end{proof}

%\textcolor{violet}{To do: Blocklist and estimator updates, cost analysis, and regret analysis}

\iffalse

\paragraph{Dynamic $\epsilon$-safe zone}

A dynamic $\epsilon$-safe zone is defined as a set of participants $S_{\epsilon}(t)$, such that for any participant $m \in S_{\epsilon}$ and any arm $1 \leq i \leq K$, we have that 
\begin{center}
   $\max_{j \in M_H}|\bar{\mu}^m_i(t) - \hat{\mu}^j_i(t)| \leq \epsilon$. 
\end{center}

\begin{assumption}{(Time dependent)}
For any time step $t$, the dynamic $\epsilon$-safe zone contains no malicious participants that only perform attacks to the estimators, namely, $M_A^1 \cap S_{\epsilon}(t)  = \emptyset$. 
\end{assumption}

\begin{remark}
    This time-dependent assumption coincides with the existing literature. For example, in \citep{zhu2023byzantine}, in order to ensure that the coordination mechanism is robust to the attacks, they are assuming that the ratio of honest neighbors among all the neighbors of each honest participant is at least $\frac{2}{3}$, which is also time-dependent and constrains how the malicious behaviors are affecting the performance of honest participants.   
\end{remark}

\hl{whether the assumption is added to the malicious or the honest participants. which one is controllable }

\textcolor{violet}{To do: estimator updates, cost analysis, and regret analysis}

\fi

\subsection{Any $M_H$ with constant cost}

Recall that with the assumption of at most $\frac{1}{3}$ participants are malicious, we have established the regret bound when the cost is constant. Without this assumption, we have proved the regret assuming distance-based cost, which highlights a gap. To this end, we next consider the constant cost that imposes more penalization and show the corresponding result. Intuitively, if the information from malicious participants is close enough to that from honest participants, the cost will always be constant, and thus the regret will be linear in $T$. As a result, we propose the following definition characterizing the difference between the two groups of participants and introduce the assumption accordingly. 

\paragraph{Pre-fixed $\epsilon$-safe zone}

A pre-fixed $\epsilon, \delta$-safe zone is defined as a set of participants $S_{\epsilon}$, such that for any participant $j \in S_{\epsilon}$ and any arm $1 \leq i \leq K$, we have that the   
\begin{center}
   $f^j_i = (1-\epsilon) \cdot h^m_i + \epsilon \cdot q_i^m$
\end{center}
where $f^j_i$ is the corresponding black-box reward generator for, $h^m_i$ is the corresponding stochastic reward generator for arm $i$ at honest participant $m$ and $q_i^m$ follows a unknown but fixed distribution different from that of $h^m_i$.

\begin{remark}
This assumption separates the malicious participants from the honest participants to make the malicious participants distinguishable, thereby eliminating the estimators from malicious participants. It is worth noting that this assumption is consistent with the existing literature \citep{dubey2020private}, which adopts the same principle when considering malicious behavior.
\end{remark}

Moreover, this assumption can be relaxed to the following version where the minimum gap, instead of the exact gap, is $\epsilon$, which measures the difference between the estimators from the malicious participants and those from the honest participants.

\paragraph{Pre-fixed $\epsilon$-safe zone}

A pre-fixed $\epsilon$-safe zone is defined as a set of participants $S_{\epsilon}$, such that for any participant $m \in S_{\epsilon}$ and any arm $1 \leq i \leq K$, we have that 
\begin{center}
   $f^m_i- h^m_i \geq  \epsilon \cdot q_i^m$. 
\end{center}
where $f^m_i$ is the corresponding black-box reward generator, $h^m_i$ is the corresponding stochastic reward generator for arm $i$ at honest participants and $q_i^m$ follows a unknown but fixed distribution with mean value $1$.

Subsequently, we introduce the assumption on the participant structure as follows.

\begin{assumption}{(Pre-fixed)}
The pre-fixed $\epsilon$-safe zone contains no malicious participants that only perform attacks on the estimators, namely, $M_A^1 \cap S_{\epsilon}  = \emptyset$. 
\end{assumption}

We update the estimator computation, where the global estimator $\Tilde{\mu}_i(t)$ is constructed as 
    \begin{align*}
     \Tilde{\mu}_i(t) = P_t\Tilde{\mu}_i(t-1) + (1-P_t)\hat{\hat{\mu}}_i(t-1)
    \end{align*}
and 
$\hat{\hat{\mu}}_i(t)$ is constructed as     \begin{align*}
        \hat{\hat{\mu}}_i(t) = \frac{\sum_{m \in C_t}\bar{\mu}_i^m(t)}{|C_t|}
    \end{align*} with $C_t = \{1 \leq j \leq M: |\hat{\mu}_i(t) - \bar{\mu}^j_i(t)| \leq \frac{\epsilon}{2}\}$
where $\hat{\mu}_i(t)$ is defined as 
    \begin{align*}
        \hat{\mu}_i(t) = \frac{\sum_{m \in B_t}\bar{\mu}_i^m(t)}{|B_t|}
    \end{align*}
where $B_t$ is defined as before.

Next, we demonstrate the corresponding regret bound under this assumption and constant cost.

\begin{theorem}
    Assume the same conditions as in Theorem 4. Let us further assume that Assumption 1 holds. With the new rule of updating the estimators, the regret bound of the proposed algorithm is $O(\log{T})$. 
\end{theorem}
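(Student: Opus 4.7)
The plan is to use the pre-fixed $\epsilon$-safe zone together with the new distance-based filter $C_t$ to ensure that, past the burn-in, no estimator-attacking participant in $M_A^1$ ever enters $C_t$, so that $\hat{\hat{\mu}}_i(t)$ is effectively an average over honest local estimators. Combined with Option~3's block list $D_t$, which already screens out the consensus-attacking set $M_A^2$ from $B_t$, this collapses the update dynamics of $\Tilde{\mu}_i(t)$ to the clean setting already analyzed in Theorem~4, and Case~1 of Operation then holds for every $t > L$ on a high-probability event, so the constant cost $c$ contributes only $c L = O(\log T)$ to the regret rather than growing linearly with $T$.

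The concrete steps I would carry out are as follows. First, reusing the trimmed-mean concentration developed in the proof of Theorem~4, I would show that $\hat{\mu}_i(t) = \frac{1}{|B_t|}\sum_{m \in B_t}\bar{\mu}_i^m(t)$ satisfies $|\hat{\mu}_i(t) - \mu_i| \leq \epsilon/4$ uniformly in $t > L$ and $i$, on an event whose complement is absorbable into the $|M_H| K l^{1-T}$ tail. Second, I would apply Hoeffding's inequality to the structured generator $f_i^j = (1-\epsilon)h_i^m + \epsilon q_i^m$ (or its relaxed analogue $f_i^j - h_i^m \geq \epsilon q_i^m$) of Assumption~1 to conclude that for any $j \in M_A^1$ the empirical mean $\bar{\mu}_i^j(t)$ concentrates around a value at distance $\Omega(\epsilon)$ from $\mu_i$ once $n_{j,i}(t) = \Omega(\epsilon^{-2}\log T)$, which the choice $L = \Theta(\log T)$ already provides. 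The triangle inequality then yields $|\hat{\mu}_i(t) - \bar{\mu}_i^j(t)| > \epsilon/2$, so $C_t \cap M_A^1 = \emptyset$ past the burn-in with high probability.

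With $C_t$ containing only honest-quality estimators, $\hat{\hat{\mu}}_i(t)$ is a sub-Gaussian average with the standard $O(\sqrt{\log t / n_{\min}(t)})$ deviation from $\mu_i$, and unrolling the recursion $\Tilde{\mu}_i(t) = P_t\Tilde{\mu}_i(t-1) + (1-P_t)\hat{\hat{\mu}}_i(t-1)$ with $P_t = 1 - 1/t$ into a weighted average of past $\hat{\hat{\mu}}_i$ values preserves this rate. Plugging into the UCB index $\Tilde{\mu}_i^m(t) + F(m,i,t)$ and running the standard pull-counting argument yields at most $O(\log T/\Delta_i^2)$ pulls of each suboptimal arm, so summing the burn-in term $(c+1)L$, the UCB exploration term, and the tail gives the claimed $O(\log T)$ bound.

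The main obstacle I expect is obtaining the uniform-in-$(t,i)$ control $|\hat{\mu}_i(t) - \mu_i| \leq \epsilon/4$ past burn-in, because a single transient failure admits an $M_A^1$ participant into $C_t$ and, through the geometric recursion for $\Tilde{\mu}_i$, perturbs the UCB index for many subsequent steps. The resolution will require a union bound over $(t,i)$ together with a calibration $L = \Omega(\epsilon^{-2}\log T)$ forcing each arm to accumulate enough honest samples to bring every $\bar{\mu}_i^m$ within $\epsilon/8$ of $\mu_i$ with probability at least $1 - T^{-\Theta(1)}$; once this uniform control is in hand, the remaining calculations are direct adaptations of the proofs of Theorems~2 and~4.
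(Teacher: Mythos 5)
Your proposal follows essentially the same route as the paper's proof: concentration of the trimmed mean $\hat{\mu}_i(t)$ around the honest average, the $\Omega(\epsilon)$ separation of $M_A^1$ estimators guaranteed by Assumption~1, the resulting exclusion $C_t \cap M_A^1 = \emptyset$ after the burn-in so that the cost vanishes, and the standard UCB pull-counting argument for the exploration term. The only cosmetic difference is in the bookkeeping of failure events: the paper controls them pointwise in $t$ with probability $3/t^2$ per step, absorbing them additively into an $O(1)$ expected-cost bound and a $4/t^2$ concentration bound for the index, rather than through the single uniform-in-$(t,i)$ union-bound event with $L = \Omega(\epsilon^{-2}\log T)$ that you propose.
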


\begin{proof}[Proof Sketch]
    The detailed proof is presented in Appendix; the idea is as follows. With the pre-fixed $\epsilon$-safe zone, the difference between $\bar{\mu}_i^m(t)$ of malicious participant $m$ and $\bar{\mu}_i^j(t)$ of honest participant $j$ is at least $\epsilon$. In the meantime, after the burn-in period, $\hat{\mu}_i(t)$ has at most $\frac{1}{4}\epsilon$ difference compared to $\mu_i$, which is also true for the distance between $\bar{\mu}^m_i(t)$ and $\mu_i$. This leads to $|\hat{\mu}_i(t) - \bar{\mu}^j_i(t)| \leq \frac{\epsilon}{2}$, and thus $|\hat{\mu}_i(t) - \bar{\mu}^m_i(t)| > \frac{\epsilon}{2}$. As a result, $C_t$ only contains honest participants, which implies that the cost is $0$. The remaining analysis on the regret based on the decomposition step is the same as that of Theorem 4, which completes the proof. 
    
\end{proof}

%\begin{remark}
%    A more relaxed version compared to Potential 2. 
%\end{remark}

\subsection{Any $M_H$ with an efficient validator selection protocol} %and constant cost}

What we have discussed presumes that the validator set includes the entire participant set. While this choice guarantees full decentralization of the consensus protocol, as every participant has a say in the protocol, it may lack efficiency, taking significant time to achieve consensus. This is supported by the well-known dilemma of balancing decentralization, which requires many participants to be validators, and efficiency, which tends to reduce the number of validators. To address this, we propose a new approach to select validators based on a newly defined reputation score system motivated by the Proof-of-Authority concept \citep{fahim2023blockchain}, allowing the flexibility of selecting any number of validators ranging from  $M_H$ to $2M_H-1$.   

More specifically, our reputation score system runs as follows. For each participant $i$, its reputation at time step $t$ is computed by a smart contract as follows. Formally, it reads as $RS_i^t = G(U_i^t)$, where $G$ is any monotonicity preserving function such that if $x_1 \leq x_2$, then $G(x_1) \leq G(x_2)$, and $U_i^t$ quantifies the accuracy of the information from participant $i$ at time step $t$, which is defined as
\begin{align*}
    U_i^t = \sum_{j=1}^{K}-(\bar{\mu}_j^i(t) - \Tilde{\mu}_j(t))^2 - \epsilon^2(\overset{\Delta}{\mu}_j^i(t)- \Tilde{\mu}_j(t))^2)^2 
\end{align*}
where $\overset{\Delta}{\mu}_j^i(t)$ denotes the estimator for arm $j$ given by participant $i$ after the consensus step, and $\bar{\mu}_j^i(t), \Tilde{\mu}_j(t)$ are the aforementioned estimators for arm $j$.

To proceed, our validator selection procedure is as follows. 

\paragraph{Validator Selection} First we rank the reputations of the participants and record the participants as $\{l_1, l_2, \ldots, l_M\}$ accordingly, where $l_1$ represents the label of participant with the largest reputation. Then given the preference of the protocol, we select the top $N$ participants, namely, $\{l_1, l_2, \ldots, l_N\}$, where $N$ is any number between $M_H$ and $2M_H-1$.

It is worth mentioning that the reputation system also ensures the fairness of the protocol, or equivalently, decentralization, as no single participant is favored and the criterion is merit-based, depending on how much they contribute to the protocol. Also, privacy is maintained since the participants are not aware of $U_i^t$ due to the existence of $G(\cdot)$. Meanwhile, the number of validators given by the reputation score system is flexible in the range of $[M_H, 2M_H-1]$, balancing the trade-off between decentralization and efficiency. While it is practically meaningful, it is also crucial to demonstrate the theoretical effectiveness of the coordination mechanism after incorporating the reputation score system. Subsequently, we present the following theoretical regret guarantee of the entire system with the above reputation score system for validator selection. The formal statement reads as follows. 
\begin{theorem}
 Assume the same conditions as in Theorem 4. Let us further assume that Assumption 1 holds and that the validators are selected based on Validator Selection. Let us assume that the cost is the constant. %if $N < \frac{3}{2}M_H$ and distance-based otherwise. 
 Then we have that 
\begin{align*}
    E[R_T|A] \leq 3\cdot L + O(\log{T}) + |M_H|Kl^{1-T}
\end{align*}
where 
$L$ is the length of the burn-in period, $l$ is the length of the signature of the participants, and $k_i$ is the threshold parameter used in the construction of $A_t$. Here the set $A$ is defined as $A= \{\forall 1 \leq t  \leq T, b_t =1\}$ which satisfies that $P(A) \geq 1- \frac{1}{l^{T-1}}$. 
\end{theorem}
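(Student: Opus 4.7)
The plan is to reduce Theorem 6 to the arguments already established in Theorems 4 and 5 by showing that, under the pre-fixed $\epsilon$-safe zone assumption, the reputation-based validator selection produces a validator set that (i) contains strictly more honest participants than malicious ones, so consensus on the correct $B_t$ is guaranteed, and (ii) excludes the participants of $M_A^1$ from $C_t$ with high probability after burn-in, so that the constant cost is incurred only on an $O(\log T)$-size set of time steps. Once these two properties are in hand, the three-part regret decomposition used before (burn-in, sub-optimal pull count, cost) carries over verbatim.

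First, I would analyze $U_i^t$ separately for honest and malicious participants after the burn-in phase. For an honest participant $i$, $\bar{\mu}_j^i(t)$ concentrates around $\mu_j$ at the standard $O((\log t / n_{i,j}(t))^{1/2})$ rate, and the contraction $\Tilde{\mu}_j(t) = P_t \Tilde{\mu}_j(t-1) + (1-P_t)\hat{\hat{\mu}}_j(t-1)$ drives $\Tilde{\mu}_j(t)$ to $\mu_j$ by the same calculation used in Theorem 2, so $U_i^t \to 0$. For a malicious participant $m \in M_A^1$, Assumption 1 guarantees that the induced mean of $f^m_j$ differs from $\mu_j$ by at least $\epsilon$ on some arm, hence $U_m^t \leq -\Omega(\epsilon^2)$ on an event of probability at least $1 - t^{-2}$ by Hoeffding. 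Because $G$ is monotonicity preserving, $RS_i^t > RS_m^t$ for every $i \in M_H$ and $m \in M_A^1$ simultaneously once $t$ exceeds a threshold of order $\log T$, which can be absorbed into the burn-in cost.

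Second, I would handle the participants of $M_A^2$ (those attacking consensus). Since the validator count $N$ lies in $[M_H, 2M_H - 1]$, even if every non-honest slot in the top-$N$ set is filled by $M_A^2$, honest validators still constitute a strict majority. Combined with the Chandelier-tree tracing argument from Theorem 4, which adds any consensus-attacking commander to $D_t$, this ensures that consensus succeeds and outputs the $B_t$ maintained by the honest validators. Then the cost-control argument from Theorem 5 applies: the set $C_t = \{j : |\hat{\mu}_i(t) - \bar{\mu}_i^j(t)| \leq \epsilon/2\}$ is contained in $M_H$ after burn-in, because honest estimators are within $\epsilon/4$ of $\mu_i$ while the $M_A^1$ estimators are separated by at least $\epsilon$. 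Thus the constant cost $c$ is paid only during burn-in and on the low-probability failure events, yielding a total cost contribution of $O(c \cdot L) + O(|M_H| K l^{1-T})$.

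The main obstacle I anticipate is the circular dependence between reputation and the validated estimator: $RS_i^t$ depends on $\Tilde{\mu}_j(t)$, but $\Tilde{\mu}_j(t)$ itself depends on which participants were selected as validators at earlier steps. I would resolve this by induction on $t$ across the burn-in period, exploiting the fact that during burn-in arm selection is deterministic ($a_m^t = t \bmod K$) and independent of the reputation scheme, so $\bar{\mu}_j^m(t)$ concentrates regardless of who validates. This seeds $\Tilde{\mu}_j$ close enough to $\mu_j$ that the separation in $U_i^t$ takes over at the end of burn-in, after which the ranking is self-stabilizing and the remaining UCB regret argument from Theorem 4 concludes the proof.
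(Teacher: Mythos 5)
Your overall route matches the paper's: establish that honest participants outrank malicious ones in reputation after burn-in, conclude the top-$N$ validator set has an honest majority so consensus succeeds, then show $C_t \subset M_H$ under the pre-fixed $\epsilon$-safe zone so the constant cost contributes only $O(1)$ beyond burn-in, and finally reuse the $T_1$/$T_2$ decomposition and UCB concentration from the earlier theorems. However, there is a genuine gap in how you handle $M_A^2$. Your reputation analysis only separates honest participants from $M_A^1$ (via Assumption 1 and the first term of $U_i^t$); you never show that consensus-attacking participants have lower reputation than honest ones. You then argue that ``even if every non-honest slot in the top-$N$ set is filled by $M_A^2$, honest validators still constitute a strict majority'' because $N \leq 2M_H - 1$ — but this presupposes that all of $M_H$ lands in the top $N$, which does not follow from beating only $M_A^1$. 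Under the hypotheses inherited from Theorem 4, $M_H$ can be arbitrary and $|M_A^2|$ can be as large as $\frac{1}{2}M - 1$, so if $M_A^2$ participants carried high reputations they could crowd honest participants out of the top $N$ entirely, and the majority claim collapses.

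The paper closes exactly this hole using the second term of the reputation score, $U_k^{2,t} = -\epsilon^2(\overset{\Delta}{\mu}_j^k(t) - \Tilde{\mu}_j(t))^2{}^2$ (the discrepancy between a participant's post-consensus estimator and the validated one): a participant $k \in M_A^2$ who serves as a validator and attacks consensus incurs $U_k^{2,t} < -\epsilon^2$, while $U_m^{2,t} = 0$ for honest $m$ and the first terms differ by at most $\frac{1}{2}(\epsilon\|q\|)^2$, so $U_k^t - U_m^t \leq -\frac{1}{2}\epsilon^2 < 0$. Combined with the $M_A^1$ separation this gives $U_j^t < U_m^t$ for \emph{every} malicious $j$, hence $M_H \subset S_V(t)$ and $|S_V(t)| \leq 2|M_H| - 1$, which is what actually licenses the honest-majority and consensus claims (the paper also notes the complementary case where an $M_A^2$ participant is never selected as a validator, in which case consensus trivially succeeds). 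You should add this argument for $U^{2,t}$; your Chandelier-tree/blocklist remark does not substitute for it, since the blocklist only removes identified attackers from $B_t$ after they have already acted, whereas the validator-selection step needs the reputation ordering itself. Your induction across the burn-in period to break the reputation/estimator circularity is a sensible supplementary observation the paper leaves implicit, and the remainder of your cost and regret arguments align with the paper's proof.
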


\begin{proof}[Proof Sketch]
The complete proof is deferred to Appendix; the main logic is as follows. We show that the reputations of honest participants are always higher than those of the malicious participants after the burn-in period. This is true, noting that the reputation of a malicious participant attacking the consensus will significantly decrease, while the reputation of a malicious participant attacking the estimators is also lower than that of an honest participant, since it has a larger first term in $U_i^t$ and due to the monotonicity property of $G(\cdot)$. Therefore, by selecting the validators, the proportion of honest participants in the validator set is at least $\frac{1}{2}$, which ensures that the consensus will be achieved. The rest of the regret analysis follows that of Theorem 6.   

\end{proof}

\begin{remark}
    It is worth noting that  existing work, such as \citep{dennis2015rep, zhou2021blockchain, arshad2022reputable}, has proposed reputation-based validator selection. However, most of this work focuses on the practical performance of the reputation system, with little theoretical analysis on the security guarantee. Here, we prove that the reputation system ensures optimal regret, which is only obtainable when the coordination mechanism is secure enough in terms of the consensus and the associated information after the consensus.
\end{remark}

\iffalse
\paragraph{Reputation score} We denote the reputation score of any participant $m$ at each time step $t$ as 
\begin{align*}
    U_m^t = -(\bar{\mu}_i^m(t) - \Tilde{\mu}_i(t))^2 - \epsilon^2(\overset{\Delta}{\mu}_i^m(t)- \Tilde{\mu}_i(t))^2)^2 
\end{align*}

Rule 2: Let us assume that $G$ is any monotonicity preserving function, in the sense that if $x_1 \leq x_2$, then $G(x_1) \leq G(x_2)$. The final reputation score of participant $m$ is defined as follows $G(U_m^t)$. 
\fi

\section{Other Performance Measure}

While we have established various theoretical bounds on the regret of the coordination mechanism, demonstrating the algorithm's optimality, it is worth noting that security has been a crucial aspect of building fault-tolerant systems. In fact, we ensure that the security guarantee is necessary for the coordination mechanism's optimality, which is connected through our proposed framework as part of our contributions. In other words, security is an implication of the shown regret bounds. More specifically, we consider the following factors that affect security and illustrate how they are connected with regret. In the meantime, recall that to incentivize the participation of participants, we invented a new cost mechanism, motivated by \citep{murhekar2024incentives}. While our setting is not completely zero-sum, which does not enable the full characterization of Nash Equilibrium, the two different groups of participants, namely, malicious participants and honest participants, have conflicting objectives. To this end, we provide a qualitative discussion by illustrating the trade-offs faced by malicious participants and point out potential future directions regarding the cost mechanism.
\subsection{Security of the Protocol}
\paragraph{Digital signature 
} The security of the coordination mechanism partially depends on the reliability of the signature scheme, as it determines whether the participant can maintain its own signature and the corresponding mapping between the label and signature. Note that the employment of the digital signature scheme \citep{goldwasser1988digital} is in a plug-in fashion, independent of everything else. As a result, the theoretical guarantee still holds, implying the security of the coordination mechanism and serving as a prerequisite (assumption) needed for achieving consensus when running the Byzantine Fault Tolerant protocol. 

\paragraph{Consensus} The security of the consensus protocol also plays an important role in the coordination mechanism's security, as no single participant can determine the estimator to be sent to the smart contract. This prevents malicious participants from manipulating the estimators but adds additional challenges for honest participants. By deploying the Byzantine Fault Tolerant protocol with the digital signature scheme and our newly proposed commander selection procedures, we guarantee that both consensus and good enough estimators are achieved with high probability. Only in this case can the regret be optimized, which implies that optimal regret indicates the security of the consensus protocol.

\paragraph{Privacy} Another main aspect of security is whether the participants' information is accessible to others, namely the degree of privacy preservation. We note that though the empirical reward estimators are available, the number of arm pulls is not broadcast. This prevents malicious participants from retrieving the reward and arm sequence, thus protecting privacy. Moreover, the rule for computing reputation is unknown to the participants, as it is implemented through a smart contract, which prevents malicious participants from manipulating the reputation. The correctness of the reputation is essential to the consensus protocol and thus the regret. In other words, the optimality of the regret also implies the correct execution of the reputation system.

\subsection{Optimality of the Cost Mechanism}
This cost mechanism is consistent with the one in \citep{murhekar2024incentives}, by adding a cost term to the original reward. While their cost depends on how many samples a participant contributes, we measure how much contribution a participant makes to the validated estimators. Honest participants need to identify the malicious participants and gain knowledge about the reward to maximize their reward function.

Assuming the cost is constant, the optimal strategy for malicious participants is to send sufficiently accurate information so that the honest participants cannot determine their identities, which implies that there is no Nash Equilibrium. Next, we provide some insights into whether an equilibrium exists between malicious and honest participants when adopting the distance-based cost. If the malicious participants keep broadcasting incorrect estimators, they will be excluded from consideration by the honest participants, allowing honest participants to incur a smaller cost. On the other hand, if the malicious participants send accurate enough information, the cost for honest participants is small as well, by definition. This implies that our proposed mechanism captures the trade-off and has the potential to uncover the Nash equilibrium with respect to how malicious participants transmit their estimators. We point out that quantitatively and rigorously characterizing the equilibrium presents a very promising direction, which goes beyond the scope of this paper.

\section{Conclusion and Future Work}

This paper considers the robust multi-agent multi-armed bandit (MAB) problem within the framework of system security, representing the first work to explore online sequential decision-making with participants distributed on a blockchain. The introduction of conditionally observable rewards and the penalization of inaccurate information brings new challenges, while taking security and privacy into consideration, besides optimality, distinct from blockchain-based federated learning or Byzantine-resilient multi-agent MAB. To solve the problem, we propose a new methodological approach combining the strategy based on Upper Confidence Bound (UCB) with blockchain techniques and invent new modifications. On blockchain, a subset of participants forms a validator set responsible for information integration and achieving consensus on information transmitted by all participants. Consensus information is then sent to a smart contract for verification, with approved blocks only upon successful verification. The environment determines and sends the reward information to the participants based on the interaction with the smart contract. As part of our contributions, we use reputation to determine the validator selection procedure, which depends on the participants' historical behaviors. Additionally, we incorporate a digital signature scheme into the consensus process, eliminating the traditional $\frac{1}{3}$ assumption of the Byzantine general problem. Furthermore, we introduce a cost mechanism to incentivize malicious participants by rewarding their contributions to the verification step. We provide a comprehensive regret analysis demonstrating the optimality of our proposed algorithm under specific assumptions, marking a breakthrough in blockchain-related learning tasks, which has seen little analysis. To conclude, we also include a detailed discussion on the security and privacy guarantees.

Moving forward, we suggest potential future directions as the next step. While our framework works for a general number of malicious participants, it relies on assumptions about the structure of malicious behaviors. Removing such assumptions could generalize the problem setting. Meanwhile, we consider two types of attacks related to the framework—those targeting the estimators and those targeting the consensus—and note that there is a rich body of literature on different aspects of security attacks. Incorporating these into the framework is both meaningful and promising. Lastly, we emphasize that mechanism design has great potential in online learning, especially in a multi-agent system, to ensure that participants perform as expected. We hope that this paper can pave the way for combining the rich literature in mechanism design with multi-agent learning systems, in the era of cybersecurity and mixed-motive cooperation.

\newpage

\bibliography{neurips_2023}

\newpage

\appendix

\section*{Proof of Results in Section 4}

\subsection*{Proof of Theorem 1}
\begin{proof}
For regret, we have the following decomposition. 
Let us denote $b_t$  as the indicator function of whether the block at time step $t$ is approved. Likewise, for any time step $t$, we denote whether the estimators from the malicious participants are utilized in the integrated estimators as $h_t$. Let the length of the burn-in period be $L$. 

Note that 
\begin{align*}
    R_T 
    & = \max_i\sum_{m \in M_H}\sum_{t=1}^T\mu_i - \sum_{m \in M_H}\sum_{t=1}^T
(\mu_{a_m^t}^b - c_t) \\
    & = \max_i\sum_{m \in M_H}\sum_{t=1}^T\mu_i - \sum_{m \in M_H}\sum_{t=1}^T\mu_{a_m^t}^b + \sum_{m \in M_H}\sum_{t=1}^Tc_t \\
    & = \max_i\sum_{m \in M_H}\sum_{t=1}^T\mu_i - \sum_{m \in M_H}\sum_{t=1}^T\mu_{a_m^t}1_{b_t =1} + \sum_{m \in M_H}\sum_{t=1}^Tc_t \\
    & = \max_i\sum_{m \in M_H}\sum_{t=1}^T\mu_i - \sum_{m \in M_H}\sum_{t=1}^T\mu_{a_m^t}1_{b_t =1} + \sum_{m \in M_H}\sum_{t=1}^Tc1_{h_t = 1} \\
\end{align*}

Meanwhile, the regret can be bounded as follows
\begin{align}\label{eq:R_T}
    R_T & \leq  L + c\cdot L + \sum_{t=L+1}^T\sum_{m \in M_H}(\mu_{i^*} - \mu_{a_m^t}1_{b_t =1}) + \sum_{m \in M_H}\sum_{t=L+1}^Tc1_{h_t = 1} \notag \\
   &  \doteq (c+1)\cdot L + T_1 + T_2
\end{align}

We start with the second term $T_2$. Note that $h_t =1$ is equivalent to $\{m: m \in B_t \cap m \not\in M_H\} \neq \emptyset$. Note that because the cost is positive, $B_t$ is nonempty.  

By taking the expectation over $T_2$, we derive
\begin{align*}
    E[T_2|A] & = \sum_{m \in M_H}\sum_{t=L+1}^TcE[1_{h_t = 1}] \\
    & = \sum_{m \in M_H}\sum_{t=L+1}^TcE[1_{\{m: m \in B_t \cap m \not\in M_H\} \neq \emptyset}] 
\end{align*}

Based on Lemma 2 in \citep{zhu2023byzantine}, we obtain that 
\begin{align*}
   1_{\{m: m \in B_t \cap m \not\in M_H\} \neq \emptyset} = 1_{|A_t| < 2f} 
\end{align*}
which immediately implies that 
\begin{align*}
    E[T_2|A] & = \sum_{m \in M_H}\sum_{t=L+1}^TcE[1_{h_t = 1}] \\
    & = \sum_{m \in M_H}\sum_{t=L+1}^TcE[1_{\{m: m \in B_t \cap m \not\in M_H\} \neq \emptyset}] \\
    & = \sum_{m \in M_H}\sum_{t=L+1}^TcE[1_{|A_t| < 2f}]. 
\end{align*}

In the meantime, we note that for any honest validators, the choice of $A_t$ guarantees that honest participants are included after the burn-in period. More specifically, the set of $A_t$ satisfies that for any validator $j \in M_H$, 
\begin{align*}
    m \in A_t \Leftrightarrow k_in_{m,i}(t) > n_{j,i}(t) \Leftrightarrow m \in M_H
\end{align*} 
where $1 < k_i < 2$. This condition holds at the end of burn-in period which is straightforward since each honest. After the burn-in period, the honest participants has the same decision rule
\begin{align*}
    a_m^t = argmax_i \Tilde{\mu}_{i}^m(t) + F(m,i,t)
\end{align*}
where $\Tilde{\mu}_{i}^m(t) = \tilde{\mu}_{i}^b(t)$. In other words, each honest participant uses the validated estimator $\tilde{\mu}_{i}^b(t)$.Since both  $n_{m,i}(t)$ and $n_{j,i}(t)$ are larger than $\frac{L}{K}$, then we have that there exists $k_i = \frac{n_{j,i}(t)K}{L}$, such as  $k_in_{m,i}(t) > n_{j,i}(t)$ for every $m \in M_H$.

This implies that 
\begin{align}\label{eq:A_t-f}
   A_t > |M_H| \geq 2f 
\end{align}
by the assumption that the number of honest participants is at least $\frac{2}{3}M$. 

That is to say, 
\begin{align*}
    E[1_{|A_t| > 2f}] = 1
\end{align*}
and subsequently, we have 
\begin{align*}
    E[T_2|A] & = \sum_{m \in M_H}\sum_{t=L+1}^TcE[1_{|A_t| < 2f}] \\
    & = 0 
\end{align*}

We note that the construction of $A_t$ is done without knowing the number of pulls of arms of other participants. This is realized by using the homomorphic results, Theorem 5.2 as in \citep{asharov2012multiparty} under the universal composability framework.  %to be plugged in; this would change \ref{eq:A_t-f} to a high probability statement}

Next, we proceed to bound the first term $T_1$. Note that 
\begin{align*}
E[T_1|A] & \leq \sum_{t=L+1}^T\sum_{m \in M_H}(\mu_{i^*} - \mu_{a_m^t}1_{b_t =1}) \\
& = (T-L) \cdot |M_H|\cdot \mu_{i^*} - \sum_{m \in M_H}\sum_{t=L}^TE[\mu_{a_m^t}|b_t = 1]P(b_t = 1)  
\end{align*}

In the meantime, we obtain the following 
\begin{align*}
    & E[\mu_{a_m^t}|b_t = 1] \\
    & = E[\sum_{k=1}^K\mu_k\cdot 1_{a_m^t =k} | b_t = 1] \\
    & = \sum_{k=1}^KE[\mu_k1_{a_m^t = k}|b_t = 1] \\
    & \geq \sum_{k=1}^K\mu_k \cdot \frac{1}{P(b_t =1)} \cdot (E[1_{a_m^t = k}] - P(b_t = 0)).
\end{align*}

This immediately gives us that 
\begin{align}\label{eq:E_T_2}
    & E[T_1|A] \notag \\
    & \leq (T-L)|M_H|\mu_{i^*} - \sum_{m \in M_H}\sum_{t=L}^T(\sum_{k=1}^K\mu_k \cdot \frac{1}{P(b_t =1)} \cdot (E[1_{a_m^t = k}] - P(b_t = 0))) P(b_t = 1) \notag \\
    & = (T-L)|M_H|\mu_{i^*} - \sum_{m \in M_H}\sum_{t=L}^T(\sum_{k=1}^K\mu_k(E[1_{a_m^t = k}] - P(b_t = 0)) \notag \\
    & = (T-L)|M_H|\mu_{i^*} -\sum_{m \in M_H}\sum_{t=L}^T\sum_{k=1}^K \mu_kE[1_{a_m^t = k}] + \sum_{m \in M_H}\sum_{t=L}^T\sum_{k=1}^K \mu_kP(b_t = 0). 
\end{align}

Based on Theorem 2 in \citep{lamport2019byzantine}, the consensus is achieved, i.e. $b_t = 1$, as long as the digital signatures of the honest participants can not be forged. Based on our assumption, we have that the malicious participants can only perform existential forgery on the signatures of the honest participants and the attacks are adaptive chosen-message attack. Then based on the result, Main Theorem  in \citep{goldwasser1988digital}, the attack holds with probability at most $\frac{1}{Q(l)}$ for any polynomial function $Q$ and large enough $l$ where $l$ is the length of the signature. 

More precisely, we have that with probability at least $1- \frac{1}{Tl^{T-1}}$, the signature of the honest participants can not be forged, and thus, the consensus can be achieved, i.e.
\begin{align}\label{eq:b_t}
   P(b_t =1) \geq 1- \frac{1}{Tl^{T-1}}.  
\end{align}

Subsequently, we derive that 
\begin{align*}
   (\ref{eq:E_T_2}) & \leq (T-L)|M_H|\mu_{i^*} -\sum_{m \in M_H}\sum_{t=L}^T\sum_{k=1}^K \mu_kE[1_{a_m^t = k}] + \sum_{m \in M_H}\sum_{t=L}^T\sum_{k=1}^K \mu_k(\frac{1}{Tl^{T-1}}) \\
   & \leq \sum_{m \in M_H}\sum_{t=L}^T(\mu_{i^*} - \sum_{k=1}^K \mu_kE[1_{a_m^t = k}]) + |M_H|Kl^{T-1} \\
   & = \sum_{m \in M_H}\sum_{k=1}^K\Delta_kE[n_{m,k}(t)] + |M_H|Kl^{T-1} \\
   & \doteq T_{21} + |M_H|Kl^{T-1}
\end{align*}

And for each honest participant, they are using the estimators based on the validated estimators, as long as the block is approved. Consider the following event, $A = \{\forall 1 \leq t  \leq T, b_t =1\}$. Based on (\ref{eq:b_t}) and the Bonferroni's inequality, we obtained that 
\begin{align*}
    P(A) & = P(\forall 1 \leq t  \leq T, b_t =1) \\ 
    & = 1 - P(\exists 1 \leq t \leq T, b_t = 0) \\
    & \geq 1 - \sum_{t=1}^TP(b_t=0)\\
    & \geq 1 - \frac{1}{l^{T-1}}. 
\end{align*}

On event $A$, the blockchain always gets approved, and then all the honest participants follow the validated estimators from the validators. By (\ref{eq:A_t-f}) and Lemma 2 in \citep{zhu2023byzantine}, we have that the validated estimator $\Tilde{\mu}_{i}(t)$ can be expressed as 
\begin{align*}
    \hat{\mu}_{i}(t) = \sum_{j \in A_t \cap M_H}w_{j,i}(t)\bar{\mu}_{i}^j(t)
\end{align*}
%which is also equivalent to $\Tilde{\mu}_{i}^m(t)$
where the weight $w_{j,i}(t)$ meets the condition 
\begin{align*}
    \sum_{j \in A_t \cap M_H}w_{j,i}(t) = 1, 
\end{align*}
which immediately implies that 
\begin{align*}
    E[\hat{\mu}_{i}(t)] = \mu_i.
\end{align*}

We note that the variance of $\hat{\mu}_{i}(t)$, $var(\hat{\mu}_{i}(t))$, satisfies that, 
\begin{align*}
    var(\hat{\mu}_{i}(t)) & = var(\sum_{j \in A_t \cap M_H}w_{j,i}(t)\bar{\mu}_{i}^j(t)) \\
    & \leq |A_t \cap M_H|\sum_{j \in A_t \cap M_H}w_{j,i}(t)^2var(\bar{\mu}_{i}^j(t))) \\
    & \leq |A_t \cap M_H|\sum_{j \in A_t \cap M_H}w_{j,i}^2(t)\sigma^2\frac{1}{n_{j,i}(t)} \\
    & \leq |A_t \cap M_H|\sum_{j \in A_t \cap M_H}w_{j,i}^2(t)\sigma^2\frac{k_i}{n_{m,i}(t)} \\
    & = |M_H|\frac{k_i}{n_{m,i}(t)}\sum_{j \in M_H}w_{j,i}^2(t)\sigma^2 \\
    & \leq |M_H|\frac{k_i\sigma^2}{n_{m,i}(t)}
\end{align*}
where the inequality holds by the Cauchy-Schwarz inequality, the second inequality holds by the definition of sub-Gaussian distributions, the third inequality results from the construction of $A_t$, and the last inequality is as a result of $(a+b)^2 \geq a^2+b^2$. 

Next, we show by induction that $var(\Tilde{\mu}_i(t)) \leq 3|M_H|\frac{k_i\sigma^2}{n_{m,i}(t)}$ for $t \geq 3K$.

At time step $3K$, we have that $var(\Tilde{\mu}_i(t)) \leq 1$ since $E[\Tilde{\mu}_i(t)] = \mu_i \leq 1$. In the meantime, 
\begin{align*}
   & 3|M_H|\frac{k_i\sigma^2}{n_{m,i}(t-1)} \\
   & \geq  3|M_H|\frac{k_i\sigma^2}{3} \\
   & = |M_H|k_i\sigma^2 \geq 1
\end{align*}
since we have $k_i \geq 1$ and $\sigma^2 \geq \frac{1}{M_H}$.

First, assume that for $t-1$, we have $var(\Tilde{\mu}_i(t-1)) \leq 3|M_H|\frac{k_i\sigma^2}{n_{m,i}(t-1)}$. 

Meanwhile, by the update rule such that $\Tilde{\mu}_i(t) = (1-P_t)\hat{\mu}_i(t) + P_t\Tilde{\mu}_i(\tau)$ where $\tau = \max_{s< t}\{b_s = 1\}$. 

Note that with probability at least $P(A) = 1 - \frac{1}{l^{T-1}}$, $b_s=1$ for all $s < t$. This implies that on event $A$, $\tau = t-1$. Therefore, by the cauchy-schwartz inequality, we obtain that  
\begin{align*}
   var(\Tilde{\mu}_i(t)) & \leq 2(1-P_t)^2(var(\hat{\mu}_i(t))) +  2P_t^2var(\Tilde{\mu}_i(t-1)) \\
   & \leq \frac{1}{2}|M_H|\frac{k_i\sigma^2}{n_{m,i}(t)} + \frac{1}{2}3|M_H|\frac{k_i\sigma^2}{n_{m,i}(t-1)} \\
   & \leq 3|M_H|\frac{k_i\sigma^2}{n_{m,i}(t)}
\end{align*}
where the last inequality holds by the fact that $n_{m,i}(t-1) \geq  n_{m,i}(t) -1 \geq \frac{2}{3}n_{m,i}(t)$ when $t > 3 \cdot K$.

Subsequently, we have that 
\begin{align}\label{eq:case1}
    & P(\Tilde{\mu}^m_i(t) - \sqrt{\frac{C_1\log t}{n_{m,i}(t)}} > \mu_i, n_{m,i}(t-1) \geq l) \notag \\
    & \leq \exp{\{-\frac{(\sqrt{\frac{C_1\log t}{n_{m,i}(t)}})^2}{2var(\Tilde{\mu}^m_i)}\}} \notag \\
    & \leq \exp{\{-\frac{(\sqrt{\frac{C_1\log t}{n_{m,i}(t)}})^2}{6|M_H|\frac{k_i\sigma^2}{n_{m,i}(t)}}\}} \notag \\
    & = \exp{\{-\frac{C_1\log t}{6|M_H|k_i\sigma^2}\}} \leq  \frac{1}{t^2}
\end{align}
where the first inequality holds by Chernoff bound, the second inequality is derived by plugging in the above upper bound on $var(\Tilde{\mu}^m_i(t))$, and the last inequality results from then choice of $C_1$ that satisfies $\frac{C_1}{6|M_H|k_i\sigma^2} \geq 1$. 

Likewise, by symmetry, we have 
\begin{align}\label{eq:case2}
    & P(\Tilde{\mu}^m_i(t) + \sqrt{\frac{C_1\log t}{n_{m,i}(t)}} < \mu_i, n_{m,i}(t-1) \geq l) \leq  \frac{1}{t^2}. 
\end{align}

Meanwhile, we have that \begin{align}\label{eq:case3}
    \sum_{t= L+1}^TP(\mu_i + 2\sqrt{\frac{C_1\log t}{n_{m,i}(t-1)}} > \mu_{i^*},n_{m,i}(t-1) \geq l) = 0
\end{align}
if the choice of $l$ satisfies $l \geq [\frac{4C_1\log T}{\Delta_i^2}]$ with $\Delta_i = \mu_{i^*} - \mu_i$.

Based on the decision rule, we have the following hold for $n_{m,i}(T)$ with $l \geq [\frac{4C_1\log T}{\Delta_i^2}]$, 
\begin{align*}
    n_{m,i}(T) & \leq l + \sum_{t=L+1}^T1_{\{a_t^m = i, n_{m,i}(t) > l\}}  \\
    & \leq l + \sum_{t=L+1}^T1_{\{\Tilde{\mu}^m_i - \sqrt{\frac{C_1\log t}{n_{m,i}(t-1)}} > \mu_i, n_{m,i}(t-1) \geq l\}} \\
    & \qquad \qquad + \sum_{t=L+1}^T1_{\{\Tilde{\mu}^m_{i^*} + \sqrt{\frac{C_1\log t}{n_{m,{i^*}}(t-1)}} < \mu_{i^*}, n_{m,i}(t-1) \geq l\}} \\
    & \qquad \qquad + \sum_{t=L+1}^T1_{\{\mu_i + 2\sqrt{\frac{C_1\log t}{n_{m,i}(t-1)}} > \mu_{i^*},n_{m,i}(t-1) \geq l\}}. 
\end{align*}

By taking the expectation over $n_{m,i}(t)$, we obtain 
\begin{align}\label{eq:n}
    E[n_{m,i}(t)] & \leq l + \sum_{t=L+1}^{T}P(\Tilde{\mu}^m_i(t) - \sqrt{\frac{C_1\log t}{n_{m,i}(t)}} > \mu_i, n_{m,i}(t-1) \geq l) \notag \\
    & \qquad \qquad + \sum_{t=L+1}^{T}P(\Tilde{\mu}^m_i(t) + \sqrt{\frac{C_1\log t}{n_{m,i}(t)}} < \mu_i, n_{m,i}(t-1) \geq l) \notag \\
    & \qquad \qquad + \sum_{t= L+1}^TP(\mu_i + 2\sqrt{\frac{C_1\log t}{n_{m,i}(t-1)}} > \mu_{i^*},n_{m,i}(t-1) \geq l) \notag \\
    & \leq l + \sum_{t=L+1}^{T}\frac{1}{t^2} + \sum_{t=L+1}^{T}\frac{1}{t^2} + 0 \notag \\
    & \leq l + \frac{\pi^2}{3} = [\frac{4C_1\log T}{\Delta_i^2}] + \frac{\pi^2}{3}
\end{align}
where the second inequality holds by using (\ref{eq:case1}), (\ref{eq:case2}), and (\ref{eq:case3}). 

Then by the definition of $T_{21}$, we derive 
\begin{align*}
    E[T_{21}|A] & = \sum_{m \in M_H}\sum_{k=1}^K\Delta_kE[n_{m,k}(t)] \\
    & \leq \sum_{m \in M_H}\sum_{k=1}^K\Delta_k([\frac{4C_1\log T}{\Delta_i^2}] + \frac{\pi^2}{3})
\end{align*}
where the inequality results from (\ref{eq:n}).

Consequently, we obtain 
\begin{align}
    (\ref{eq:E_T_2}) & \leq E[T_{21}|A] + |M_H|Kl^{T-1} \notag \\
    & \leq \sum_{m \in M_H}\sum_{k=1}^K\Delta_k([\frac{4C_1\log T}{\Delta_i^2}] + \frac{\pi^2}{3}) + |M_H|Kl^{T-1}.
\end{align}

Furthermore, we have 
\begin{align}
    (\ref{eq:R_T}) & \leq (c+1)\cdot L + E[T_1|A] + E[T_2|A]  \notag \\
    & \leq (c+1)\cdot L + \sum_{m \in M_H}\sum_{k=1}^K\Delta_k([\frac{4C_1\log T}{\Delta_i^2}] + \frac{\pi^2}{3}) + |M_H|Kl^{T-1} + 0 
\end{align}
which completes the proof.

\end{proof}

\subsection*{Proof of Theorem 2}
\begin{proof}
By the same deifnition of the regret, we, again, have the following regret decomposition

Note that 
\begin{align*}
    R_T 
    & = \max_i\sum_{m \in M_H}\sum_{t=1}^T\mu_i - \sum_{m \in M_H}\sum_{t=1}^T
(\mu_{a_m^t}^b - c_t) \\
    & = \max_i\sum_{m \in M_H}\sum_{t=1}^T\mu_i - \sum_{m \in M_H}\sum_{t=1}^T\mu_{a_m^t}^b + \sum_{m \in M_H}\sum_{t=1}^Tc_t \\
    & = \max_i\sum_{m \in M_H}\sum_{t=1}^T\mu_i - \sum_{m \in M_H}\sum_{t=1}^T\mu_{a_m^t}1_{b_t =1} + \sum_{m \in M_H}\sum_{t=1}^Tc_t \\
    & = \max_i\sum_{m \in M_H}\sum_{t=1}^T\mu_i - \sum_{m \in M_H}\sum_{t=1}^T\mu_{a_m^t}1_{b_t =1} + \sum_{m \in M_H}\sum_{t=1}^Tc_t1_{h_t = 1} \\
\end{align*}
Meanwhile, the regret can be bounded as follows
\begin{align}\label{eq:R_T}
    R_T & \leq  L + c\cdot L + \sum_{t=L+1}^T\sum_{m \in M_H}(\mu_{i^*} - \mu_{a_m^t}1_{b_t =1}) + \sum_{m \in M_H}\sum_{t=L+1}^Tc_t1_{h_t = 1} \notag \\
   &  \doteq (c+1)\cdot L + T_1 + T_2
\end{align}

We start with the second term $T_2$. Note that $h_t =1$ is equivalent to $\{m: m \in B_t \cap m \not\in M_H\} \neq \emptyset$. 

By taking the expectation over $T_2$, we derive
\begin{align*}
    E[T_2|A] & = \sum_{m \in M_H}\sum_{t=L+1}^TE[c_t \cdot 1_{h_t = 1}] \\
    & = \sum_{m \in M_H}\sum_{t=L+1}^TE[c_t \cdot 1_{\{m: m \in B_t \cap m \not\in M_H\} \neq \emptyset}] 
\end{align*}

By the Chernoff-Hoeffding's inequality and choosing $\eta_t \geq \frac{\sqrt{\log{t}}}{\sqrt{n_i(t)}}$, we obtain that 
\begin{align*}
    & P(|\bar{\mu}_{i}^m(t) - \mu_i| \geq \eta_t) \\
    & = P(|\bar{\mu}_{i}^m(t) - \mu_i| \geq \frac{\sqrt{\log{t}}}{\sqrt{n_i(t)}}) \\
    & \leq 2\exp{\{-\frac{\log{t}}{4\sigma^2n^2_{m,i}(t)}\}} \\
    & = 2\exp{\{-\frac{\log{t}}{4\sigma^2n^2_{m,i}(t)}\}} \\
    & \leq \textcolor{violet}{\frac{1}{t^2} \doteq P_t},
\end{align*}
when $t > L$, i.e. after the burn-in period. 
%\hl{the purple part needs double-check. }

If $c_t \leq \frac{1}{t}$, then we have that 
 $E[T_2|A] \leq \log{T}$, which presents an upper bound on $T_2$. 

If $c_t = Dist(\Tilde{\mu}_i(t), \mu_i)$, then based on the definition of $B_t$ and $m \in B_t$ as in Option 2, we have that $\bar{\mu}_{i}^m(t)$ is smaller than the top $f$ values and larger than the below $f$ values. Based on Theorem 1 as in \citep{dong2023byzantine}, we have that 
\begin{align*}
    ||\hat{\mu}_i(t) - \bar{z}_i(t)|| \leq c_{\delta}\Delta^2
\end{align*}
where $\Delta$ represents the largest distance between the honest estimators and $\bar{z}_i(t)$ that is the averaged estimator maintained by all the honest participants. 

Then by definition, we obtain that 
\begin{align*}
    \Delta & = \max_{i \in M_H}|\bar{\mu}_i(t) - \bar{z}_i(t)|  \\
    & \leq \max_{i,j \in M_H} [|\bar{\mu}_i(t) - \mu_i| + |\bar{\mu}_j(t) - \mu_i|] \\
    & \leq 2\eta_t
\end{align*}
which holds with probability $1-P_t$. 

Therefore, we have that with probability $1-P_t$
\begin{align*}
    |\hat{\mu}_i(t) - \bar{z}_i(t)| \leq 2c_{\delta}\eta_t 
\end{align*}
and 
\begin{align*}
    |\bar{z}_i(t) - \mu_i| \leq \eta_t
\end{align*}
which holds by the Chernoff Bound inequality. 

Subsequently, we obtain that with probability $1-P_t$
\begin{align*}
    |\hat{\mu} - \mu_i| & \leq |\hat{\mu}_i(t) - \bar{z}_i(t)|+  |\bar{z}_i(t) - \mu_i| \\
    & \leq (2c_{\delta} + 1)\eta_t^6
\end{align*}

Meanwhile, for the distance measure, we have with probability $1-P_t$
\begin{align}\label{eq:dist}
    Dist(\tilde{\mu}_i(t) - \mu_i) &  = |\tilde{\mu}_i(t) - \mu_i|^6 \notag \\
    & = |\bar{q}_t\tilde{\mu}_i(t-1) + (1-\bar{q}_t)\hat{\mu}_i(t) - \mu_i|^6 \notag \\
    & \leq \bar{q}_t|\tilde{\mu}_i(t-1) - \mu_i|^6 + (1-\bar{q}_t)|\hat{\mu}_i(t) - \mu_i|^6 \notag \\
    & \leq \bar{q}_tDist(\Tilde{\mu}_{i}(t-1), \mu_i) + (1-\bar{q}_t)(2c_{\delta} + 1)^6\eta_t^6
\end{align}

Since  by definition, we derive that 
\begin{align*}
    & P(Dist(\tilde{\mu}_i(L), \mu_i) \geq O(\frac{\eta_t^2}{n_i(t)})) \\
    & \leq P(Dist(\tilde{\mu}_i(L), \mu_i) \geq O(\frac{\log{t}^3}{n_i(t)^3})) \\ 
    & \leq P(|\tilde{\mu}_i(L) - \mu_i| \geq O(\frac{\sqrt{\log{t}}}{\sqrt{n_i(t)}})) \\
    & \leq P(|\tilde{\mu}_i(L) - \mu_i| \geq \eta_t) \\
    & = P_t
\end{align*}

That is to say, with probability $1-P_t$, 
\begin{align*}
    Dist(\tilde{\mu}_i(L), \mu_i) \leq O(\frac{\eta_L^2}{n_i(L)})
\end{align*}

Next, suppose that at each time step $t$, with probability $1-P_t$, $Dist(\tilde{\mu}_i(t), \mu_i) \leq O(\frac{\eta_t^2}{n_i(t)})$.

Then by choosing $\bar{q}_t = 1 - \frac{1}{n_i(t)}$ and \ref{eq:dist}, we have that 
\begin{align*}
    & Dist(\Tilde{\mu}_{i}(t+1), \mu_i) \\
    & \leq \bar{q}_tDist(\Tilde{\mu}_{i}(t), \mu_i) + (1-\bar{q}_t)(2c_{\delta} + 1)^6\eta_t^6 \\
    & \leq O(\frac{\eta_t^2}{n_i(t)}) + O(\frac{1}{n_i(t)}\eta_t^6) \\
    & = O(\frac{\eta_{t+1}^2}{n_i(t+1)})
\end{align*}

Then we use the mathematical induction and  derive that for any $t \geq L$, with probability $1-P_t$, 
\begin{align*}
    Dist(\tilde{\mu}_i(t), \mu_i) \leq O(\frac{\eta_t^2}{n_i(t)}). 
\end{align*}

By the definition of cost, we obtain that with probability $1 - P_t$
\begin{align*}
    c_t & = \min_iDist(\tilde{\mu}_i(t), \mu_i) \\
    & \leq O(\frac{\log{t}}{\max_in_i(t)^2}) = O(\frac{\log{t}}{t^2})
\end{align*}
where the last inequality holds by the fact that $\max_{i}n_i(t) \geq \frac{\sum_in_i(t)}{K} = O(t)$.

Then we drive that 
\begin{align*}
     E[T_2|A] & \leq  \sum_{m \in M_H}\sum_{t=L+1}^TE[c_t \cdot 1_{\{m: m \in B_t \cap m \not\in M_H\} \neq \emptyset}] \\
     & \leq \sum_{m \in M_H}\sum_{t=L+1}^TE[c_t] \\
     & \leq \sum_{m \in M_H}\sum_{t=L+1}^T[(1-P_t) \cdot O(\frac{\log{t}}{t^2}) + P_t] \\
     & = \sum_{m \in M_H}\sum_{t=L+1}^TO(\frac{\log{t}}{t^2}) \\
     & \leq \log{T}\sum_{m \in M_H}\sum_{t=L+1}^TO(\frac{1}{t^2}) =  O(\log{T}). 
\end{align*}

We next follow the same steps as in the proof of Theorem 1 for bounding $E[T_1]$. Note that 
\begin{align*}
E[T_1|A] & \leq \sum_{t=L+1}^T\sum_{m \in M_H}(\mu_{i^*} - \mu_{a_m^t}1_{b_t =1}) \\
& = (T-L) \cdot |M_H|\cdot \mu_{i^*} - \sum_{m \in M_H}\sum_{t=L}^TE[\mu_{a_m^t}|b_t = 1]P(b_t = 1)  
\end{align*}

In the meantime, we obtain the following 
\begin{align*}
    & E[\mu_{a_m^t}|b_t = 1] \\
    & = E[\sum_{k=1}^K\mu_k\cdot 1_{a_m^t =k} | b_t = 1] \\
    & = \sum_{k=1}^KE[\mu_k1_{a_m^t = k}|b_t = 1] \\
    & \geq \sum_{k=1}^K\mu_k \cdot \frac{1}{P(b_t =1)} \cdot (E[1_{a_m^t = k}] - P(b_t = 0)).
\end{align*}

This immediately gives us that 
\begin{align}\label{eq:E_T_2}
    & E[T_1|A] \notag \\
    & \leq (T-L)|M_H|\mu_{i^*} - \sum_{m \in M_H}\sum_{t=L}^T(\sum_{k=1}^K\mu_k \cdot \frac{1}{P(b_t =1)} \cdot (E[1_{a_m^t = k}] - P(b_t = 0))) P(b_t = 1) \notag \\
    & = (T-L)|M_H|\mu_{i^*} - \sum_{m \in M_H}\sum_{t=L}^T(\sum_{k=1}^K\mu_k(E[1_{a_m^t = k}] - P(b_t = 0)) \notag \\
    & = (T-L)|M_H|\mu_{i^*} -\sum_{m \in M_H}\sum_{t=L}^T\sum_{k=1}^K \mu_kE[1_{a_m^t = k}] + \sum_{m \in M_H}\sum_{t=L}^T\sum_{k=1}^K \mu_kP(b_t = 0). 
\end{align}

Based on Theorem 2 in \citep{lamport2019byzantine}, the consensus is achieved, i.e. $b_t = 1$, as long as the digital signatures of the honest participants can not be forged. Based on our assumption, we have that the malicious participants can only perform existential forgery on the signatures of the honest participants and the attacks are adaptive chosen-message attack. Then based on the result, Main Theorem  in \citep{goldwasser1988digital}, the attack holds with probability at most $\frac{1}{Q(l)}$ for any polynomial function $Q$ and large enough $l$ where $l$ is the length of the signature. 

More precisely, we have that with probability at least $1- \frac{1}{l^T}$, the signature of the honest participants can not be forged, and thus, the consensus can be achieved, i.e.
\begin{align}\label{eq:b_t}
   P(b_t =1) \geq 1- \frac{1}{l^{T}}.  
\end{align}
Consequently, we have 
\begin{align}\label{eq:E_T_!_2}
   E[T_1] & \leq (T-L)|M_H|\mu_{i^*} -\sum_{m \in M_H}\sum_{t=L}^T\sum_{k=1}^K \mu_kE[1_{a_m^t = k}] + \sum_{m \in M_H}\sum_{t=L}^T\sum_{k=1}^K \mu_k(\frac{1}{l^T}) \notag \\
   & \leq \sum_{m \in M_H}\sum_{t=L}^T(\mu_{i^*} - \sum_{k=1}^K \mu_kE[1_{a_m^t = k}]) + |M_H|Kl^{T-1} \notag \\
   & = \sum_{m \in M_H}\sum_{k=1}^K\Delta_kE[n_{m,k}(t)] + |M_H|Kl^{T-1} 
\end{align}

Based on the decision rule, we have the following hold for $n_{m,i}(T)$ with $l \geq [\frac{4C_1\log T}{\Delta_i^2}]$, 
\begin{align*}
    n_{m,i}(T) & \leq l + \sum_{t=L+1}^T1_{\{a_t^m = i, n_{m,i}(t) > l\}}  \\
    & \leq l + \sum_{t=L+1}^T1_{\{\Tilde{\mu}^m_i - \sqrt{\frac{C_1\log t}{n_{m,i}(t-1)}} > \mu_i, n_{m,i}(t-1) \geq l\}} \\
    & \qquad \qquad + \sum_{t=L+1}^T1_{\{\Tilde{\mu}^m_{i^*} + \sqrt{\frac{C_1\log t}{n_{m,{i^*}}(t-1)}} < \mu_{i^*}, n_{m,i}(t-1) \geq l\}} \\
    & \qquad \qquad + \sum_{t=L+1}^T1_{\{\mu_i + 2\sqrt{\frac{C_1\log t}{n_{m,i}(t-1)}} > \mu_{i^*},n_{m,i}(t-1) \geq l\}}. 
\end{align*}

By taking the expectation over $n_{m,i}(t)$, we obtain 
\begin{align}\label{eq:n}
    E[n_{m,i}(t)] & \leq l + \sum_{t=L+1}^{T}P(\Tilde{\mu}^m_i(t) - \sqrt{\frac{C_1\log t}{n_{m,i}(t)}} > \mu_i, n_{m,i}(t-1) \geq l) \notag \\
    & \qquad \qquad + \sum_{t=L+1}^{T}P(\Tilde{\mu}^m_i(t) + \sqrt{\frac{C_1\log t}{n_{m,i}(t)}} < \mu_i, n_{m,i}(t-1) \geq l) \notag \\
    & \qquad \qquad + \sum_{t= L+1}^TP(\mu_i + 2\sqrt{\frac{C_1\log t}{n_{m,i}(t-1)}} > \mu_{i^*},n_{m,i}(t-1) \geq l) 
\end{align}

Recall that by our concentration inequality, we obtain that 
\begin{align*}
    & P(\Tilde{\mu}^m_i(t) + (\frac{C_1\log t}{n_{m,i}(t)})^{\frac{1}{6}} < \mu_i, n_{m,i}(t-1) \geq l)  \\
    & \leq P(|\Tilde{\mu}^m_i(t) - \mu_i| \geq O(\frac{\log{t}^{\frac{1}{6}}}{n_i(t)^{\frac{1}{3}}}), n_{m,i}(t-1) \geq l)  \\
    & = P(Dist(\Tilde{\mu}^m_i(t), \mu_i) \geq O(\frac{\eta_t^2}{n_i(t)}) , n_{m,i}(t-1) \geq l) \\
    & \leq P_t = \frac{1}{t^2}. 
\end{align*}

Meanwhile, we have that \begin{align}\label{eq:case3}
    \sum_{t= L+1}^TP(\mu_i + 2(\frac{C_1\log t}{n_{m,i}(t-1)})^{\frac{1}{6}} > \mu_{i^*},n_{m,i}(t-1) \geq l) = 0
\end{align}
if the choice of $l$ satisfies $l \geq [\frac{4C_1\log T}{\Delta_i^6}]$ with $\Delta_i = \mu_{i^*} - \mu_i$.

This immediately implies that 
\begin{align*}
    E[n_{m,i}(t)] & \leq l + \sum_{t= L+1}^TP_t + \sum_{t= L+1}^TP_t + 0 \\
    &  \leq l + \frac{\pi^2}{3} \\
    & = O(\log{T}).
\end{align*}

Then, by \ref{eq:E_T_!_2}, we arrive at 
\begin{align*}
    E[T_1] & \leq \sum_{m \in M_H}\sum_{k=1}^K\Delta_kE[n_{m,k}(t)] + |M_H|Kl^{T-1} \\
    & \leq O(\log{T}) + |M_H|Kl^{T-1}
\end{align*}

Henceforth, based on \ref{eq:R_T}, we have the following upper bound on the regret 
\begin{align}\label{eq:R_T}
    E[R_T|A] & \leq  (c+1)\cdot L + E[T_1|A] + E[T_2|A] \notag \\
    & \leq (c+1)\cdot L + O(\log{T}) + |M_H|Kl^{T-1}
\end{align}
which completes the proof.

\subsection*{Proof of Theorem 3}
\begin{proof}
    Again, we start by decomposing the regret as 
    \begin{align*}
            R_T & \leq  L + c\cdot L + \sum_{t=L+1}^T\sum_{m \in M_H}(\mu_{i^*} - \mu_{a_m^t}1_{b_t =1}) + \sum_{m \in M_H}\sum_{t=L+1}^Tc_t1_{h_t = 1} \notag \\
   &  \doteq (c+1)\cdot L + T_1 + T_2
    \end{align*}

We note that the consensus protocol runs $M$ times, with each validator (i.e., participant in this case) being selected as a commander. For any malicious participant $j \in M_A^2$, it serves as a commander and is thus included in $D_t$. This holds true because, according to Lemma 3 in \citep{goldwasser1988digital}, if the message is a chandelier tree generated by the secret key $SK_m$ of participant $m$, any participant can verify the public key $PK_m$, or equivalently, trace back to the root of the signature tree of the message sender. Due to the unique mapping between $PK_m$ and $m$, the honest participant keeps a record of the vertex index of the malicious participants that attack the consensus.

This implies that  $j \not\in B_t$, i.e. the set $B_t$ can only contain estimators from either honest participants or set $M_A^1$ that satisfies $|M_A^1| < M_H - 1$. Therefore, the property of $B_t$ follows from that as in Option 2, which essentially indicates that Option 3 is equivalent to Option 2 with at least one half honest participants. Considering that the remaining algorithmic steps are the same, the analysis of $T_1$ and $T_2$ is consistent with that of Theorem 2. 

Consequently, we have that 
\begin{align*}
    E[T_1] & \leq \sum_{m \in M_H}\sum_{k=1}^K\Delta_kE[n_{m,k}(t)] + |M_H|Kl^{T-1} \\
    & \leq O(\log{T}) + |M_H|Kl^{T-1}
\end{align*}
and 
\begin{align*}
    E[T_2] \leq \log{T}\sum_{m \in M_H}\sum_{t=L+1}^TO(\frac{1}{t^2}) =  O(\log{T}). 
\end{align*}

Subsequently, we derive the same regret bound as in Theorem, as 
\begin{align*}
    E[R_T] & \leq (c+1)\cdot L + O(\log{T}) + |M_H|Kl^{T-1} + \log{T}\sum_{m \in M_H}\sum_{t=L+1}^TO(\frac{1}{t^2}) \\
    & = O(\log{T})
\end{align*}
which completes the proof. 

\end{proof}

\subsection*{Proof of Theorem 4}
\begin{proof}
The proof of Theorem 4 is similar to that of Theorem 3 as follows. The regret of the coordination mechanism is again decomposed as 
    \begin{align*}
            R_T & \leq  L + c\cdot L + \sum_{t=L+1}^T\sum_{m \in M_H}(\mu_{i^*} - \mu_{a_m^t}1_{b_t =1}) + \sum_{m \in M_H}\sum_{t=L+1}^Tc_t1_{h_t = 1} \notag \\
   &  \doteq (c+1)\cdot L + T_1 + T_2
    \end{align*}

For malicious participant $j \in M_A^2$, it only attacks the consensus process and does not attack the estimators. In the meantime, for malicious participant $l \in M_A^1$, it only attacks the estimators, but does not attack the consensus process. Since $|M_A^1| < \frac{1}{2}M - 1$, when using Option 2, the set $B_t$ is the same as the case where only at most $\frac{1}{2}M$ participants are malicious. Therefore, we have that
\begin{align*}
    E[T_2] \leq \log{T}\sum_{m \in M_H}\sum_{t=L+1}^TO(\frac{1}{t^2}) =  O(\log{T}). 
\end{align*}

Meanwhile, since the total number of malicious participants in $M_A^1$ meets that $|M_A^1| < \frac{1}{2}M - 1$, and the consensus protocol runs $M$ participants with each participant as a commander, the consensus always succeeds with probability at least $1 - \frac{1}{l^T}$. This immediately gives us that based on (\ref{eq:E_T_!_2})
\begin{align*}
E[T_1] & = \sum_{m \in M_H}\sum_{k=1}^K\Delta_kE[n_{m,k}(t)] + |M_H|Kl^{T-1} 
\end{align*}

Meanwhile, the statistical property of $n_{m,k}(t)$ depends on that of the global estimator $\Tilde{\mu}_k(t)$ by our decision and update rule. The computation of $\Tilde{\mu}_k(t)$ depends on set $B_t$, which is the same as the case where there are only at most $\frac{1}{2}M - 1$ malicious participants. Subsequently, we obtain
\begin{align*}
    E[n_{m,i}(t)] & \leq l + \sum_{t= L+1}^TP_t + \sum_{t= L+1}^TP_t + 0 \\
    &  \leq l + \frac{\pi^2}{3} \\
    & = O(\log{T}).
\end{align*}
when $l \geq [\frac{4C_1\log T}{\Delta_i^6}]$ with $\Delta_i = \mu_{i^*} - \mu_i$.

Then, based on \ref{eq:E_T_!_2}, we again arrive at 
\begin{align*}
    E[T_1] & \leq \sum_{m \in M_H}\sum_{k=1}^K\Delta_kE[n_{m,k}(t)] + |M_H|Kl^{T-1} \\
    & \leq O(\log{T}) + |M_H|Kl^{T-1}
\end{align*}

Henceforth, by the regret decomposition, we have the following upper bound on the regret 
\begin{align}\label{eq:R_T}
    E[R_T|A] & \leq  (c+1)\cdot L + E[T_1|A] + E[T_2|A] \notag \\
    & \leq (c+1)\cdot L + O(\log{T}) + |M_H|Kl^{T-1}
\end{align}
which completes the proof.

\end{proof}

\subsection*{Proof of Theorem 5}
\begin{proof}
The proof of Theorem 5 follows a similar approach to that of Theorem 4. the coordination mechanism's regret can be decomposed as follows:    \begin{align*}
            R_T & \leq  L + c\cdot L + \sum_{t=L+1}^T\sum_{m \in M_H}(\mu_{i^*} - \mu_{a_m^t}1_{b_t =1}) + \sum_{m \in M_H}\sum_{t=L+1}^Tc_t1_{h_t = 1} \notag \\
   &  \doteq (c+1)\cdot L + T_1 + T_2
    \end{align*}

For malicious participant $j \in M_A^2$, the attacks are limited to the consensus process and do not affect the estimators. Conversely, a malicious participant $l \in M_A^1$, it targets the estimators but does not disrupt the consensus process. Given that $|M_A^1| < \frac{1}{2}M - 1$, when using Option 2, the set $B_t$ is the same as the case where only at most $\frac{1}{2}M$ participants are malicious. Therefore, we have that
\begin{align*}
    E[T_2] \leq \log{T}\sum_{m \in M_H}\sum_{t=L+1}^TO(\frac{1}{t^2}) =  O(\log{T}). 
\end{align*}

The analysis of $T_1$ requires further work, especially considering the development of this new commander selection protocol. More specifically, by definition, we have $w_m(t) = w_{m} = 1 - \frac{\log{T}}{T}$, for any $m \in M_H$. Consider the event of whether honest participant $m$ is selected as a commander as $E_{m}^t$. In other words, $E_m^t =1$ if participant $m$ is a commander and 0 otherwise. Define $E_t$ as $\cap_{m \in M_H}\{E_{m}^t = 0\}$. Then we have that 
    \begin{align*}
        E[\sum_{t=1}^TE_t] & = \sum_{t=1}^TE[\cap_{m \in M_H}\{E_{m}^t = 0\}] \\
        & \leq \sum_{t=1}^T\sum_{m \in M_H}E[\{E_{m}^t = 0\}] \\
        & = \sum_{t=1}^T\sum_{m \in M_H}(1 - w_m(t)) = \log{T}.
    \end{align*}
    It implies that for the total length of having no honest commanders is at most $\log{T}$, there is no honest commander, which indicated that the consensus fails. In the meantime, we note that if there is a honest commander in set $S_C(t)$, then the consensus is achieved with the correct $\Tilde{\mu}$, i.e. $b_t=1$, and thus we have $E[1_{b_t = 0}] \leq \frac{\log{T}}{T}$ and $E[\sum_{t=1}^T1_{b_t = 0}] \leq \log{T}$.  
    
    Differently, by our choice, $w_j(t) = w_{j} = \frac{\log{\frac{|M_A|}{\eta}}}{T}$, for any $j \in M_A$. Then we consider the event of whether malicious participant $j$ is selected as a commander or not, namely, $F^j_t$. Likewise, $F^j_t = 1$ if participant $j$ is a commander and $0$ otherwise. Define $F_t = \cap_{j \in M_A}\{\exists s \leq t, s.t. F^j_s = 1\}$. Then we obtain 
    \begin{align*}
      P(F_t) & = P(\cap_{j \in M_A}\{\exists s \leq t, s.t. F^j_s = 1\}) \\
      &  \geq 1 - \sum_{j \in M_A}P(\{\forall s \leq t, s.t. F^j_s = 0\}) \\
      & = 1 - \sum_{j \in M_A}(1 - w_j)^t \\
      & = 1 - |M_A|(1 - w_j)^t \\
      & \geq 1 - |M_A|e^{-w_jt}
    \end{align*}
    By the choice of $w_j = \frac{\log{\frac{|M_A|}{\eta}}}{T}$, we derive that $P(F_{t}) \geq 1 - |M_A|e^{-w_jt} = 1 - \eta$. This means that at each time step, the malicious participants have high probability of being chosen as commanders, which provides enough incentive for them to participate, and thus implies the rationality of this probability.

Subsequently, since the total number of malicious participants in $M_A^1$ meets that $|M_A^1| < \frac{1}{2}M - 1$, and the consensus protocol runs $M$ participants with at least one honest commander, the consensus always succeeds with probability at least $1 - \frac{\log{T}}{T}$. Based on \ref{eq:E_T_2}, we obtain that 
\begin{align*}
E[T_1] & \leq (T-L)|M_H|\mu_{i^*} -\sum_{m \in M_H}\sum_{t=L}^T\sum_{k=1}^K \mu_kE[1_{a_m^t = k}] + \sum_{m \in M_H}\sum_{t=L}^T\sum_{k=1}^K \mu_kP(b_t = 0) \\
& \leq \sum_{m \in M_H}\sum_{k=1}^K\Delta_kE[n_{m,k}(t)] + |M_H|KO(\log{T}). 
\end{align*}

Again, based on the decision rule, we have the following hold for $n_{m,i}(T)$ with $l \geq [\frac{4C_1\log T}{\Delta_i^2}]$, 
\begin{align*}
    n_{m,i}(T) & \leq l + \sum_{t=L+1}^T1_{\{a_t^m = i, n_{m,i}(t) > l\}}  \\
    & \leq l + \sum_{t=L+1}^T1_{\{\Tilde{\mu}^m_i - \sqrt{\frac{C_1\log t}{n_{m,i}(t-1)}} > \mu_i, n_{m,i}(t-1) \geq l\}} \\
    & \qquad \qquad + \sum_{t=L+1}^T1_{\{\Tilde{\mu}^m_{i^*} + \sqrt{\frac{C_1\log t}{n_{m,{i^*}}(t-1)}} < \mu_{i^*}, n_{m,i}(t-1) \geq l\}} \\
    & \qquad \qquad + \sum_{t=L+1}^T1_{\{\mu_i + 2\sqrt{\frac{C_1\log t}{n_{m,i}(t-1)}} > \mu_{i^*},n_{m,i}(t-1) \geq l\}}. 
\end{align*}

Note that taking the expectation over $n_{m,i}(t)$ gives 
\begin{align}
    E[n_{m,i}(t)] & \leq l + \sum_{t=L+1}^{T}P(\Tilde{\mu}^m_i(t) - \sqrt{\frac{C_1\log t}{n_{m,i}(t)}} > \mu_i, n_{m,i}(t-1) \geq l) \notag \\
    & \qquad \qquad + \sum_{t=L+1}^{T}P(\Tilde{\mu}^m_i(t) + \sqrt{\frac{C_1\log t}{n_{m,i}(t)}} < \mu_i, n_{m,i}(t-1) \geq l) \notag \\
    & \qquad \qquad + \sum_{t= L+1}^TP(\mu_i + 2\sqrt{\frac{C_1\log t}{n_{m,i}(t-1)}} > \mu_{i^*},n_{m,i}(t-1) \geq l) 
\end{align}

Using the concentration inequality, we obtain that 
\begin{align*}
    & P(\Tilde{\mu}^m_i(t) + (\frac{C_1\log t}{n_{m,i}(t)})^{\frac{1}{6}} < \mu_i, n_{m,i}(t-1) \geq l)  \\
    & \leq P(|\Tilde{\mu}^m_i(t) - \mu_i| \geq O(\frac{\log{t}^{\frac{1}{6}}}{n_i(t)^{\frac{1}{3}}}), n_{m,i}(t-1) \geq l)  \\
    & = P(Dist(\Tilde{\mu}^m_i(t), \mu_i) \geq O(\frac{\eta_t^2}{n_i(t)}) , n_{m,i}(t-1) \geq l) \\
    & \leq P_t = \frac{1}{t^2}. 
\end{align*}

Likewise, we obtain that 
\begin{align}
    \sum_{t= L+1}^TP(\mu_i + 2(\frac{C_1\log t}{n_{m,i}(t-1)})^{\frac{1}{6}} > \mu_{i^*},n_{m,i}(t-1) \geq l) = 0
\end{align}
if the choice of $l$ satisfies $l \geq [\frac{4C_1\log T}{\Delta_i^6}]$ with $\Delta_i = \mu_{i^*} - \mu_i$, which leads to 
\begin{align*}
    E[n_{m,i}(t)] & \leq l + \sum_{t= L+1}^TP_t + \sum_{t= L+1}^TP_t + 0 \\
    &  \leq l + \frac{\pi^2}{3} \\
    & = O(\log{T}).
\end{align*}

Consequently, we obtain that 
\begin{align*}
    E[T_1] & \leq \sum_{m \in M_H}\sum_{k=1}^K\Delta_kE[n_{m,k}(t)] + |M_H|Kl^{T-1} \\
    & \leq O(\log{T}) + |M_H|Kl^{T-1}
\end{align*}

Combining all these together, we derive the following upper bound on the expected regret 
\begin{align}\label{eq:R_T}
    E[R_T|A] & \leq  (c+1)\cdot L + E[T_1|A] + E[T_2|A] \notag \\
    & \leq (c+1)\cdot L + O(\log{T}). 
\end{align}

This concludes the proof of Theorem 5.

\end{proof}

\subsection*{Proof of Theorem 6}
\begin{proof}
Again, we decompose system's regret as follows:    \begin{align*}
            R_T & \leq  L + c\cdot L + \sum_{t=L+1}^T\sum_{m \in M_H}(\mu_{i^*} - \mu_{a_m^t}1_{b_t =1}) + \sum_{m \in M_H}\sum_{t=L+1}^Tc_t1_{h_t = 1} \notag \\
   &  \doteq (c+1)\cdot L + T_1 + T_2
    \end{align*}

Differently, the definition of $c_t$ is a constant-based one, where $c_t = c1_{\exists m \in C_t \& m\in M_A^1}$ since the estimators in $C_t$ are used for computing $\Tilde{\mu}_i^m(t)$. Note that here we do not count malicious participants in $M_A^2$ in, as these participant do not perform attacks on the estimators, i.e. having no negative effect on $\Tilde{\mu}_{i}(t)$. 

In the meantime, by the robust estimator property of the estimators in $B_t$, we obtain that 
\begin{align*}
    ||\hat{\mu}_i(t) - \bar{z}_i(t)|| \leq c_{\Delta}\Delta^2
\end{align*}
where with probability $1-P_t$,
\begin{align*}
    \Delta & = \max_{m \in M_H}|\bar{\mu}_i^m(t) - \bar{z}_i(t)|  \\
    & \leq \max_{m,j \in M_H} [|\bar{\mu}_i^j(t) - \mu_i| + |\bar{\mu}_i^m(t) - \mu_i|] \\
    & \leq 2\eta_t 
\end{align*}

This immediately implies that for $m \in M_H$
\begin{align*}
    |\bar{\mu}^m_i(t) - \hat{\mu}_i| 
    & \leq |\bar{\mu}_i(t) - \bar{z}_i(t) + \bar{z}_i(t) - \hat{\mu}_i| \\
    & \leq 2\eta_t + 4(c_{\Delta})\eta_t^2 \\
    & \leq \frac{1}{2}\epsilon||q||
\end{align*}
where the last inequality holds by the choice of $\epsilon$ and $||q||$ denotes the minimum value of the random variable following distribution $q_i^m$.

By assumption, we have that for $j \in M_A^1$,  
\begin{align*}
    f_i^j(t) = (1-\epsilon)g_i^m(t) + \epsilon q_i^m(t)
\end{align*}
where $f_i^j(t)$ represents the underlying distribution of the rewards of malicious agent $j \in M_A^1$. It is worth emphasizing that this assumption is consistent with \citep{dubey2020private}, originated from the Huber's $\epsilon$-Contamination model \citep{huber2011robust}. 

By taking the expectation over the distributions, we obtain that 
\begin{align*}
    \mu_j = (1-\epsilon)\mu_i + \epsilon E[q]
\end{align*}

This implies that for $j \in M_A^1$ with probability $1-2P_t$
\begin{align*}
    |\bar{\mu}^j_i(t) - \hat{\mu}_i| & \geq 
    |\bar{\mu}^j_i(t) - \bar{\mu}^m_i(t) + \bar{\mu}^m_i(t)-\hat{\mu}_i| \\
    & \geq |\bar{\mu}^j_i(t) - \bar{\mu}^m_i(t)| - |\bar{\mu}^m_i(t)-\hat{\mu}_i| \\
    & \geq \epsilon||q|| - \frac{1}{2}\epsilon||q|| \\
    & \geq \frac{1}{2}\epsilon||q||
\end{align*}

This is to say that $j \in M_A^1$ does not belong to $C_t$, and thus implies that $c_t = 0$ for $t > L$ with probability $1 - 3P_t = 1 - \frac{3}{t^2}$, and $c_t = c$ with probability $\frac{3}{t^2}$. 

Therefore we have that 
\begin{align*}
    E[T_2] \leq \sum_{m \in M_H}\sum_{t=L+1}^TO(\frac{3}{t^2}) =  O(1). 
\end{align*}

Based on (\ref{eq:n}), we again obtain that  
\begin{align}
    E[n_{m,i}(t)] & \leq l + \sum_{t=L+1}^{T}P(\Tilde{\mu}^m_i(t) - \sqrt{\frac{C_1\log t}{n_{m,i}(t)}} > \mu_i, n_{m,i}(t-1) \geq l) \notag \\
    & \qquad \qquad + \sum_{t=L+1}^{T}P(\Tilde{\mu}^m_i(t) + \sqrt{\frac{C_1\log t}{n_{m,i}(t)}} < \mu_i, n_{m,i}(t-1) \geq l) \notag \\
    & \qquad \qquad + \sum_{t= L+1}^TP(\mu_i + 2\sqrt{\frac{C_1\log t}{n_{m,i}(t-1)}} > \mu_{i^*},n_{m,i}(t-1) \geq l) 
\end{align}

By the fact that with probability $1-3P_t$, $c_t = 0$, we again have that the validated estimator $\Tilde{\mu}_{i}(t)$ can be expressed as with probability $1-3P_t$
\begin{align*}
    \Tilde{\mu}_{i}(t) = \sum_{j \in A_t \cap M_H}w_{j,i}(t)\bar{\mu}_{i}^j(t)
\end{align*}
which is also equivalent to $\Tilde{\mu}_{i}^m(t)$. Here the weight $w_{j,i}(t)$ meets the condition 
\begin{align*}
    \sum_{j \in A_t \cap M_H}w_{j,i}(t) = 1, 
\end{align*}
which immediately implies that 
\begin{align*}
    E[\Tilde{\mu}_{i}(t)] = \mu_i.
\end{align*}

We note that the variance of $\Tilde{\mu}_{i}(t)$, $var(\Tilde{\mu}_{i}(t))$, satisfies that, with probability $1-3P_t$
\begin{align*}
    var(\Tilde{\mu}_{i}(t)) & = var(\sum_{j \in A_t \cap M_H}w_{j,i}(t)\bar{\mu}_{i}^j(t)) \\
    & \leq |A_t \cap M_H|\sum_{j \in A_t \cap M_H}w_{j,i}(t)^2var(\bar{\mu}_{i}^j(t))) \\
    & \leq |A_t \cap M_H|\sum_{j \in A_t \cap M_H}w_{j,i}^2(t)\sigma^2\frac{1}{n_{j,i}(t)} \\
    & \leq |A_t \cap M_H|\sum_{j \in A_t \cap M_H}w_{j,i}^2(t)\sigma^2\frac{k_i}{n_{m,i}(t)} \\
    & = |M_H|\frac{k_i}{n_{m,i}(t)}\sum_{j \in M_H}w_{j,i}^2(t)\sigma^2 \\
    & \leq |M_H|\frac{k_i\sigma^2}{n_{m,i}(t)}
\end{align*}
where the inequality holds by the Cauchy-Schwarz inequality, the second inequality holds by the definition of sub-Gaussian distributions, the third inequality results from the construction of $A_t$, and the last inequality is as a result of $(a+b)^2 \geq a^2+b^2$. 

Subsequently, we have that 
\begin{align}
    & P(\Tilde{\mu}^m_i(t) - \sqrt{\frac{C_1\log t}{n_{m,i}(t)}} > \mu_i, n_{m,i}(t-1) \geq l) \notag \\
    & \leq \exp{\{-\frac{(\sqrt{\frac{C_1\log t}{n_{m,i}(t)}})^2}{2var(\Tilde{\mu}^m_i)}\}} \notag \\
    & \leq (\exp{\{-\frac{(\sqrt{\frac{C_1\log t}{n_{m,i}(t)}})^2}{2|M_H|\frac{k_i\sigma^2}{n_{m,i}(t)}}\}})(1-3P_t) + 3P_t\notag \\
    & = (1-3P_t)\exp{\{-\frac{C_1\log t}{2|M_H|k_i\sigma^2}\}} + 3P_t \leq  \frac{4}{t^2}
\end{align}
where the first inequality holds by Chernoff bound, the second inequality is derived by plugging in the above upper bound on $var(\Tilde{\mu}^m_i(t))$, and the last inequality results from then choice of $C_1$ that satisfies $\frac{C_1}{6|M_H|k_i\sigma^2} \geq 1$. 

Likewise, by symmetry, we have 
\begin{align}
    & P(\Tilde{\mu}^m_i(t) + \sqrt{\frac{C_1\log t}{n_{m,i}(t)}} < \mu_i, n_{m,i}(t-1) \geq l) \leq  \frac{4}{t^2}. 
\end{align}

This immediately implies that 
\begin{align*}
    E[n_{m,i}(t)] & \leq l + \sum_{t= L+1}^T4P_t + \sum_{t= L+1}^T4P_t + 0 \\
    &  \leq l + \frac{4\pi^2}{3} \\
    & = O(\log{T}).
\end{align*}

Then we arrive at 
\begin{align*}
    E[T_1] & \leq \sum_{m \in M_H}\sum_{k=1}^K\Delta_kE[n_{m,k}(t)] + |M_H|Kl^{T-1} \\
    & \leq O(\log{T}) + |M_H|Kl^{T-1}
\end{align*}

Once again, by the regret decomposition, we obtain that 
\begin{align*}
   E[R_T]  & \leq E[(c+1)\cdot L + T_1 + T_2] \\
   & \leq (c+1)\cdot L + O(1) + O(\log{T}) + |M_H|Kl^{T-1} \\
   & = O(\log{T})
\end{align*}

\end{proof}

\subsection*{Proof of Theorem 7}

\begin{proof}
As before, the regret is decomposed as    \begin{align*}
    R_T & \leq  L + c\cdot L + \sum_{t=L+1}^T\sum_{m \in M_H}(\mu_{i^*} - \mu_{a_m^t}1_{b_t =1}) + \sum_{m \in M_H}\sum_{t=L+1}^Tc_t1_{h_t = 1} \notag \\
   &  \doteq (c+1)\cdot L + T_1 + T_2
    \end{align*}

We first show the monotonicity of the reputation score after the burn-in period. Recall that the reputation score of participant $i$ is defined as 
\begin{align*}
    U_i^t & = \sum_{j=1}^{K}-(\bar{\mu}_j^i(t) - \Tilde{\mu}_j(t))^2 - \epsilon^2e^{(\overset{\Delta}{\mu}_j^i(t)- \Tilde{\mu}_j(t)^2)} \\
    & \doteq U_i^{1,t} + U_i^{2,t}
\end{align*}
where $\overset{\Delta}{\mu}_j^i(t)$ denotes the estimator for arm $j$ given by participant $i$ after the consensus step, and $\bar{\mu}_j^i(t), \Tilde{\mu}_j(t)$ are the aforementioned estimators for arm $j$.

%\hl{ update the formula in the main body  $U_i^t = \sum_{j=1}^{K}-(\bar{\mu}_j^i(t) - \Tilde{\mu}_j(t))^2 - \epsilon^2e^{(\overset{\Delta}{\mu}_j^i(t)- \Tilde{\mu}_j(t)^2)}$; and we use the constant cost definition; under the pre-fixed $\epsilon$ zone assumption }

We consider $t > L$, where 
\begin{align}
    & P(\Tilde{\mu}^m_i(t) + \sqrt{\frac{C_1\log t}{n_{m,i}(t)}} < \mu_i, n_{m,i}(t-1) \geq l) \leq  \frac{1}{t^2}. 
\end{align}

We consider malicious participant $j \in M_A^1$ and honest participant $m \in M_H$, and by definition, it only attacks the estimators, which immediately gives us that 
\begin{align*}
    U_j^{2,t} = U_i^{2,t} = 0.
\end{align*}

Again, by this definition and the pre-fixed $\epsilon$ zone, we obtain  
\begin{align*}
    f_i^j(t) = (1-\epsilon)g_i^m(t) + \epsilon q_i^m(t)
\end{align*}
and thus 
$j \in M_A^1$ with probability $1-2P_t$
\begin{align*}
    |\bar{\mu}^j_i(t) - \hat{\mu}_i| & \geq 
    |\bar{\mu}^j_i(t) - \bar{\mu}^m_i(t) + \bar{\mu}^m_i(t)-\hat{\mu}_i| \\
    & \geq |\bar{\mu}^j_i(t) - \bar{\mu}^m_i(t)| - |\bar{\mu}^m_i(t)-\hat{\mu}_i| \\
    & \geq \epsilon||q|| - \frac{1}{2}\epsilon||q|| \\
    & \geq \frac{1}{2}\epsilon||q||
\end{align*}

Subsequently, we arrive at 
\begin{align*}
    |\bar{\mu}^j_i(t) - \Tilde{\mu}_i| \geq \frac{1}{2}\epsilon||q||
\end{align*}

Meanwhile, we have 
\begin{align*}
    |\bar{\mu}^m_i(t) - \hat{\mu}_i| 
    & \leq |\bar{\mu}_i(t) - \bar{z}_i(t) + \bar{z}_i(t) - \hat{\mu}_i| \\
    & \leq 2\eta_t + 4(c_{\Delta})\eta_t^2 \\
    & \leq \frac{1}{2}\epsilon||q||
\end{align*}
which also implies that 
\begin{align*}
    |\bar{\mu}^m_i(t) - \Tilde{\mu}_i| \geq \frac{1}{2}\epsilon||q||
\end{align*}

That is to say, the first term in the reputation score meets that
\begin{align*}
    U_j^{1,t} \leq U_{m}^{1,t}
\end{align*}
and subsequently, we obtain 
\begin{align*} 
    U_j^t \leq  U_m^t.
\end{align*}

Next, let us consider malicious participant $k \in M_A^2$ and honest participant $m \in M_H$. By definition, participant $k$ only attacks the consensus process without altering the estimators. However, Equivalently, this does not imply 
\begin{align*}
    U_k^{1,t} = U_m^{1,t} = 0
\end{align*}
since $\bar{\mu}_i^m \neq \bar{\mu}_i^k$ due to the randomness, which brings additional challenge. 

We consider the difference between the estimators, 
\begin{align*}
   & |U_k^{1,t} - U_m^{1,t}| \\
   & \leq |\bar{\mu}_i^k(t) - \Tilde{\mu}_i(t)|^2 + |\bar{\mu}_i^m(t)-\Tilde{\mu}_i(t)|^2 \\
   & \leq \frac{1}{2}(\epsilon ||q||)^2.
\end{align*}

In the meantime, if participant $k$ serves as a validator, we immediately have 
\begin{align*}
    & \overset{\Delta}{\mu}_j^i(t)- \Tilde{\mu}_j(t)^2) > 0, \\
    & U_k^{2,t} < -\epsilon^2
\end{align*}
while in the meantime, $U_m^{2,t} = 0$.

Consequently, we arrive at 
\begin{align*}
   U_k^t - U_m^t & = U_k^{1,t} - U_m^{1,t} +  U_k^{2,t} - U_m^{2,t} \\
   & \leq |U_k^{1,t} - U_m^{1,t}| + U_k^{2,t} \\
   & \leq \frac{1}{2}\epsilon^2 - \epsilon^2 = -\frac{1}{2}\epsilon^2 < 0
\end{align*}
where the second last inequality holds by assuming $||q|| \leq 1 $ without loss of generality. 

Combining these all together, we obtain that
\begin{align*}
    U_j^t < U_m^t 
\end{align*}
for any malicious participant $j \in M_A$ and honest participant $m \in M_H$, which implies the monotonocity of $U$ quantity in the reputation score. 

Subsequently, by the monotone preserving property of function $G(\cdot)$, we immediately have 
\begin{align*}
    G(U_j^{t}) < G(U_m^t)
\end{align*}
for any malicious participant $j \in M_A$ and honest participant $m \in M_H$. 

Based on the Validator selection Protocol where the top $N$ participants are selected with $|M_H| < N < 2|M_H|-1$, we obtain that $M_H \subset S_V(t)$ and $|S_V(t)| \leq 2|M_H|-1$, which implies that the consensus always achieves if every validator is selected as a commander for exactly once, i.e. $b_t =1$ with probability at most $1 - Ml^{-T}$.

Otherwise, if a participant $k \in M_A^2$ is never selected as a validator, then the set of validators does not contain any malicious participants attacking the consensus, and then the consensus always achieves, i.e. $b_t = 1$. 

To summarize, we have that 
\begin{align*}
  P(b_t = 1) \geq 1 - Ml^{-T}. 
\end{align*}

Note that the set $B_t, C_t$ herein is the same as the set of $B_t, C_t$ as in Theorem 6, which immediately implies that $j \in M_A^1$ does not belong to $C_t$, and thus implies that $c_t = 0$ for $t > L$ with probability $1 - 3P_t = 1 - \frac{3}{t^2}$, and $c_t = c$ with probability $\frac{3}{t^2}$. 

Therefore we again obtain that by the definition of $T_2$ that depends on $b_t$ and $c_t$
\begin{align*}
    E[T_2] \leq \sum_{m \in M_H}\sum_{t=L+1}^TO(\frac{3}{t^2}) =  O(1). 
\end{align*}

Again, using (\ref{eq:n}), we obtain the following decomposition 
\begin{align}
    E[n_{m,i}(t)] & \leq l + \sum_{t=L+1}^{T}P(\Tilde{\mu}^m_i(t) - \sqrt{\frac{C_1\log t}{n_{m,i}(t)}} > \mu_i, n_{m,i}(t-1) \geq l) \notag \\
    & \qquad \qquad + \sum_{t=L+1}^{T}P(\Tilde{\mu}^m_i(t) + \sqrt{\frac{C_1\log t}{n_{m,i}(t)}} < \mu_i, n_{m,i}(t-1) \geq l) \notag \\
    & \qquad \qquad + \sum_{t= L+1}^TP(\mu_i + 2\sqrt{\frac{C_1\log t}{n_{m,i}(t-1)}} > \mu_{i^*},n_{m,i}(t-1) \geq l) 
\end{align}

Furthermore, with probability $1-3P_t$, $c_t = 0$ again implies that the validated estimator $\Tilde{\mu}_{i}(t)$ has the following explicit formula, with probability $1-3P_t$
\begin{align*}
    \Tilde{\mu}_{i}(t) = \sum_{j \in A_t \cap M_H}w_{j,i}(t)\bar{\mu}_{i}^j(t)
\end{align*}
which is the value of $\Tilde{\mu}_{i}^m(t)$ as well, where $w_{j,i}(t)$ are the weights such that  
\begin{align*}
    \sum_{j \in A_t \cap M_H}w_{j,i}(t) = 1. 
\end{align*}
This immediately gives us that 
\begin{align*}
    E[\Tilde{\mu}_{i}(t)] = \mu_i.
\end{align*}

We note that the variance of $\Tilde{\mu}_{i}(t)$, $var(\Tilde{\mu}_{i}(t))$, satisfies that, with probability $1-3P_t$
\begin{align*}
    var(\Tilde{\mu}_{i}(t)) & = var(\sum_{j \in A_t \cap M_H}w_{j,i}(t)\bar{\mu}_{i}^j(t)) \\
    & \leq |A_t \cap M_H|\sum_{j \in A_t \cap M_H}w_{j,i}(t)^2var(\bar{\mu}_{i}^j(t))) \\
    & \leq |A_t \cap M_H|\sum_{j \in A_t \cap M_H}w_{j,i}^2(t)\sigma^2\frac{1}{n_{j,i}(t)} \\
    & \leq |A_t \cap M_H|\sum_{j \in A_t \cap M_H}w_{j,i}^2(t)\sigma^2\frac{k_i}{n_{m,i}(t)} \\
    & = |M_H|\frac{k_i}{n_{m,i}(t)}\sum_{j \in M_H}w_{j,i}^2(t)\sigma^2 \\
    & \leq |M_H|\frac{k_i\sigma^2}{n_{m,i}(t)}
\end{align*}
where the inequality holds by the Cauchy-Schwarz inequality, the second inequality holds by the definition of sub-Gaussian distributions, the third inequality results from the construction of $A_t$, and the last inequality is as a result of $(a+b)^2 \geq a^2+b^2$. 

Subsequently, we have that 
\begin{align}
    & P(\Tilde{\mu}^m_i(t) - \sqrt{\frac{C_1\log t}{n_{m,i}(t)}} > \mu_i, n_{m,i}(t-1) \geq l) \notag \\
    & \leq \exp{\{-\frac{(\sqrt{\frac{C_1\log t}{n_{m,i}(t)}})^2}{2var(\Tilde{\mu}^m_i)}\}} \notag \\
    & \leq (\exp{\{-\frac{(\sqrt{\frac{C_1\log t}{n_{m,i}(t)}})^2}{2|M_H|\frac{k_i\sigma^2}{n_{m,i}(t)}}\}})(1-3P_t) + 3P_t\notag \\
    & = (1-3P_t)\exp{\{-\frac{C_1\log t}{2|M_H|k_i\sigma^2}\}} + 3P_t \leq  \frac{4}{t^2}
\end{align}
where the first inequality holds by Chernoff bound, the second inequality is derived by plugging in the above upper bound on $var(\Tilde{\mu}^m_i(t))$, and the last inequality results from then choice of $C_1$ that satisfies $\frac{C_1}{2|M_H|k_i\sigma^2} \geq 1$. 

In a similar manner, we obtain
\begin{align}
    & P(\Tilde{\mu}^m_i(t) + \sqrt{\frac{C_1\log t}{n_{m,i}(t)}} < \mu_i, n_{m,i}(t-1) \geq l) \leq  \frac{4}{t^2}. 
\end{align}

Plugging the concentration-type inequalities in, we derive
\begin{align*}
    E[n_{m,i}(t)] & \leq l + \sum_{t= L+1}^T4P_t + \sum_{t= L+1}^T4P_t + 0 \\
    &  \leq l + \frac{4\pi^2}{3} = O(\log{T}).
\end{align*}

Then we arrive at 
\begin{align*}
    E[T_1] & \leq \sum_{m \in M_H}\sum_{k=1}^K\Delta_kE[n_{m,k}(t)] + |M_H|Kl^{T-1} \\
    & \leq O(\log{T}) + |M_H|Kl^{T-1}
\end{align*}

Once again, by the regret decomposition, we obtain that 
\begin{align*}
   E[R_T]  & \leq E[(c+1)\cdot L + T_1 + T_2] \\
   & \leq (c+1)\cdot L + O(1) + O(\log{T}) + |M_H|Kl^{T-1} \\
   & = O(\log{T})
\end{align*}

%Honest participants are validators. Then by the commander selection rule where we assume that every validator serves as a commander for exactly once, the consensus succeeds on the aggregated estimator at the honest validator. 

%Aggregation 

%Consensus 

\end{proof}

\end{proof}

\end{document}